\def\BState{\State\hskip-\ALG@thistlm}
\newcommand{\argmin}{\operatornamewithlimits{argmin}}
\newtheorem{theorem}{Theorem}[section]
\newtheorem{lemma}[theorem]{Lemma}
\newtheorem{corollary}[theorem]{Corollary}
\newenvironment{definition}[1][Definition]{\begin{trivlist}
\item[\hskip \labelsep {\bfseries #1}]}{\end{trivlist}}
\begin{document}

%

%

\twocolumn[

\aistatstitle{Max-Cost Discrete Function Evaluation Problem under a Budget}

\aistatsauthor{ Feng Nan \And Joseph Wang \And Venkatesh Saligrama }
\aistatsaddress{ Boston University \\fnan@bu.edu \And Boston University \\ joewang@bu.edu \And Boston University \\ srv@bu.edu } ]

\begin{abstract}
We propose novel methods for max-cost Discrete Function Evaluation Problem (DFEP) under budget constraints. We are motivated by applications such as clinical diagnosis where a patient is subjected to a sequence of (possibly expensive) tests before a decision is made. Our goal is to develop strategies for minimizing max-costs. The problem is known to be NP hard and greedy methods based on specialized impurity functions have been proposed. We develop a broad class of \emph{admissible} impurity functions that admit monomials, classes of polynomials, and hinge-loss functions that allow for flexible impurity design with provably optimal approximation bounds. This flexibility is important for datasets when max-cost can be overly sensitive to ``outliers.'' Outliers bias max-cost to a few examples that require a large number of tests for classification. We design admissible functions that allow for accuracy-cost trade-off and result in $O(\log n)$ guarantees of the optimal cost among trees with corresponding classification accuracy levels.  
\end{abstract}

\section{Introduction}

In many applications such as clinical diagnosis, monitoring, and web search, a patient, entity or query is subjected to a sequence of tests before a decision or prediction is made. Tests can be expensive and often complementary, namely, the outcome of one test may render another redundant. The goal in these scenarios is to minimize total test costs with negligible loss in diagnostic performance.

We propose to formulate this problem as an instance of the Discrete Function Evaluation Problem (DFEP). Under this framework, we seek to learn a decision tree which {\it correctly} classifies data while minimizing the cost of testing. We then propose methods to {\it trade-off} accuracy and costs.
%
%
%
%
%

An instance of the problem is defined as $I=(S,C,T,\mathbf{c})$; Here $S=\{s_1,\dots,s_n\}$ is the set of $n$ objects; $C=\{C_1,\dots,C_m\}$ is a partition of $S$ into $m$ classes; $T$ is a set of tests; 
$\mathbf{c}$ is a cost function that assigns a cost $c(t)\geq 0$ for each test $t\in T$. Applying test $t\in T$ on object $s\in S$ will output a discrete value $t(s)$ in a finite set of possible outcomes $\{1,\dots, l_t\}$. $T$ is assumed to be complete in the sense that for any distinct $s_i,s_j\in S$ there exists a $t\in T$ such that $t(s_i)\neq t(s_j)$ so they can be distinguished by $t$. Given an instance of the DFEP, the goal is to build a testing procedure that uses tests in $T$ to determine the class of an unknown object. 
Formally, any testing procedure can be represented by a decision tree, where every internal node is associated with a test and objects are directed from the root to the corresponding leaves based on the test outcomes at each node. Given instance $I$ and decision tree $D$, the testing cost of $s\in S$, denoted as $cost(D,s)$, is the sum of all costs incurred along the root-to-leaf path in $D$ traced by $s$. We define the total cost as
$$Cost_W(D)=\underset{s\in S}{\max}  cost(D,s)$$
This is known as the max-cost testing problem in the DFEP literature and has independently received significant attention~\cite{DiagnosisDeterminationSimultaneous,TradingOffWorstExpectedCost,MoshkovGreedyAlgorithmwithWeightsforDecisionTreeConstruction,GroupBasedActiveLearning} due to the fact that in real world problems, the prior probability used to compute the expected testing cost is either unavailable or inaccurate. Another motivation stems from time-critical applications, such as emergency response \cite{GroupBasedActiveLearning}, where violation of a time-constraint may lead to unacceptable consequences.

In this paper we propose novel approaches and themes for the max-cost DFEP problem.  It is now well-known~\cite{DiagnosisDeterminationSimultaneous} that $O(\log n)$ is the best approximation factor for DFEP unless $P=NP$. Greedy methods that achieve $O(\log n)$ approximation guarantee have been proposed~\cite{DiagnosisDeterminationSimultaneous,TradingOffWorstExpectedCost,MoshkovGreedyAlgorithmwithWeightsforDecisionTreeConstruction}. These methods often rely on {\it judiciously} engineering so called impurity functions that are surprisingly effective in realizing ``optimal'' $O(\log n)$ guarantees. Authors in
\cite{DiagnosisDeterminationSimultaneous,MoshkovGreedyAlgorithmwithWeightsforDecisionTreeConstruction,TradingOffWorstExpectedCost} describe impurity functions based on the notion of Pairs, while the authors in \cite{GroupBasedActiveLearning} describe more complex impurity functions but require distributional assumptions.

In contrast, we propose a broad class of \emph{admissible} functions such that any function from this class can be chosen as an impurity function with an $O(\log n)$ approximation guarantee. Our admissible functions are in essence positive, monotone supermodular functions and admit not only pairs, monomials, classes of polynomials, but also hinge-loss functions. 

We propose new directions for the max-cost DFEP problem. In contrast to the current emphasis on correct classification, we propose to deliberately trade-off cost with accuracy. This perspective can be justified under various scenarios. First, max-cost is overly sensitive to ``{\it outliers},'' namely, a few instances require prohibitively many tests for correct classification. In these situations max-cost is not representative of most of the data and is biased towards a small subset of objects. Consequently, censoring those few ``outliers'' is meaningful from the perspective that max-cost applies to {\it all but few} examples. Second many applications have hard cost constraints that supersede correct classification of the entire data set and the goal is a tree that guarantees these cost constraints while minimizing errors. 

Our proposed admissible functions are sufficiently general and allows for trading accuracy for cost. In particular we develop methods with $O(\log n)$ guarantees of the optimal cost among trees with a corresponding classification accuracy level. Moreover, we show empirically on a number of examples that selection of impurity functions plays an important role in this trade-off. In particular some admissible functions, such as hinge-loss are particularly well-suited for low-budgets while others are preferable in high-budget scenarios.

Apart from the related approaches already described above, our work is also related to those that generally deal with expected costs~\cite{AdaSubmodular_jair2011,NearOptimalBayesianActiveLearning,GroupBasedActiveLearning} or related problems such as sub-modular set coverage problem \cite{InteractiveSubmodularSetCover_GuilloryB10}. At a conceptual level the main difference in ~\cite{InteractiveSubmodularSetCover_GuilloryB10,AdaSubmodular_jair2011,NearOptimalBayesianActiveLearning,GroupBasedActiveLearning} is in the way tests are chosen. Unlike our approach these methods employ utility functions in the policy space that acts on a sequence of observations. \cite{AdaSubmodular_jair2011} develops the notion of adaptive submodularity and has applied it for automated diagnosis. The proposed adaptive greedy algorithm can handle multiple classes/ test outcomes and arbitrary test costs but the approximation factor for the max-cost depends on the prior probability and can be very large in adversarial situations.
%
%
A popular class of related approximation algorithms is \emph{generalized binary search} (GBS) \cite{Dasgupta04analysisof,KosarajuOnanOptimalSplitTreeProblem,Nowak08generalizedbinary}. 
A special case of this problem is where each object belongs to a distinct class and is known as \emph{object identification problem} \cite{DecisionTreesforEntityIdentification} or pool-based active learning \cite{Dasgupta04analysisof}. 
When tests are restricted to binary outcomes and uniform test costs, $O(\log(1/p_{min}))$ approximation, where $p_{min}$ is the minimum probability of any single object \cite{Dasgupta04analysisof} can be obtained. Alternatively  \cite{Gupta:OptimalDecisionTreesandAdaptiveTSP} provides an algorithm which leads to an $O(\log n)$ approximation factor  for the optimal expected cost with arbitrary test costs and binary test outcomes. With respect to the max-cost, \cite{Hanneke06thecost} gave a $O(\log n)$ approximation for multiway tests and arbitrary test costs.

\paragraph{Organization:}
We present a greedy algorithm in Section \ref{sec:greedy_analysis} which we show under general assumptions on the impurity function leads to an $O(\log n)$ approximation of the optimal tree. We examine the assumptions on impurity functions and use them to define a class of \emph{admissible} impurity functions in Section \ref{sec:admissible}. Following this, we generalize from the error-free case to trade-off between max-cost and error in Section \ref{sec:trade-off}. Finally, we demonstrate performance of the greedy algorithm on real world data sets in Section \ref{sec:experiments} and show the advantage of different impurity functions along with the trade-off between error and max-cost.

\section{Greedy Algorithm and Analysis}\label{sec:greedy_analysis}
In this section, we present an analysis of the greedy algorithm \textsc{GreedyTree}. We first show that \textsc{GreedyTree} yields a tree whose max-cost is within $O(\log n)$ of the optimal max-cost for any DFEP. This bound on max-cost holds for any impurity function that satisfies a very general criteria as opposed to a fixed impurity function. In Section ~\ref{sec:admissible} we examine the assumptions on the impurity functions and present multiple examples of impurity functions for which this approximation bound holds.



Before beginning the analysis, we first define the following terms: for a given impurity function $F$, $F(G)$ is the impurity function on the set of objects $G$; $D_{F}$ is the family of decision trees with $F(L)=0$ for any of its leaf $L$; $OPT(S)$ is the minimum max-cost among all trees in $D_{F}$ for the given input set of objects $S$; $Cost_{F}(S)$ is the max-cost of the tree constructed by {\textsc{GreedyTree}} based on impurity function $F$.

\begin{algorithm}
\caption{{\textsc{GreedyTree}}}\label{algo:GreedyTree}
\begin{algorithmic}[1]
\Procedure{GreedyTree(G,T)}{}
\If {$F(G)=0$} \Return
\EndIf
\For {each test $t\in T$}
\State Compute
$R(t):= \underset{i\in \text{outcomes}}{\max} \frac{c(t)}{F(G)-F(G^i_t)}$,
\State where $G^i_t$ is the set of objects in $G$
\State that has outcome $i$ for test $t$.
\EndFor
\State $\hat{t} \gets \argmin_t R(t)$
\State $T \gets T\backslash \{\hat{t}\}$
\For {each outcome $i$ of $\hat{t}$}
\State $\textsc{GreedyTree}(G^i_t,T)$
\EndFor
\EndProcedure
\end{algorithmic}
\end{algorithm}
For simplicity, we assume the impurity function takes on integer values and outcome-independent test costs. Note that integer valued impurity functions is not a limitation because of the discrete (finite) nature of the problem - one can always scale any rational-valued impurity function to make it integer-valued. Similarly, it can be easily shown that our result extends to the outcome-dependent cost setting considered in \cite{TradingOffWorstExpectedCost} as well.

Given a DFEP, {\textsc{GreedyTree}} greedily chooses the test with the largest worst-case impurity reduction until all leaves are pure, i.e. impurity equals zero.
Let $\tau$ be the first test selected by {\textsc{GreedyTree}}. By definition of the max-cost,
\begin{equation*}
\frac{Cost_{F}(S)}{OPT(S)}=\frac{c(\tau)+\underset{i}{\max} Cost_{F}(S^i_\tau)}{OPT(S)},
\end{equation*}
where $S^i_\tau$ is the set of objects in $S$ that has outcome $i$ for test $\tau$.
Let $q$ be such that $Cost_F(S^q_\tau)=\underset{i}{\max} Cost_{F}(S^i_\tau)$. We first provide a lemma to lower bound the optimal cost, which will later be used to prove a bound on the cost of the tree.

\begin{lemma}\label{lemma:OPT_Wlowerbound}
Let $F$ be monotone and supermodular, and $\tau$ is the first test chosen by {\textsc{GreedyTree}} on the set of objects $S$, then
\begin{equation*}
c(\tau)F(S)/(F(S)-F(S^q_\tau)) \leq OPT(S).
\end{equation*}
\end{lemma}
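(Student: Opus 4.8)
The plan is to lower-bound $OPT(S)$ by exhibiting a single root-to-leaf path in an optimal tree $D^*$ whose cost is at least $c(\tau)F(S)/(F(S)-F(S^q_\tau))$. Write $\beta := c(\tau)/(F(S)-F(S^q_\tau))$; if $c(\tau)=0$ the claim is immediate, so assume $\beta>0$. The starting observation is that $\beta$ is one of the terms in the maximum defining $R(\tau)$, hence $\beta \le R(\tau)$, and since $\tau$ minimizes $R$ we get $R(t)\ge\beta$ for \emph{every} test $t$. Unpacking $R(t)=\max_i c(t)/(F(S)-F(S^i_t))\ge\beta$, this says that every test $t$ has at least one outcome $i$ --- namely the highest-impurity outcome on $S$ --- whose reduction on $S$ is small: $F(S)-F(S^i_t)\le c(t)/\beta$.

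Next I would descend through $D^*$ always following the child of largest impurity. Formally, set $G_0=S$; at an internal node with object set $G_{j-1}$ and test $t_j$, let $G_j$ be the nonempty outcome-subset $(G_{j-1})^i_{t_j}$ of maximum impurity, and continue until reaching a leaf $L$, where $F(L)=0$ since $D^*\in D_F$. The cost of this path is $\sum_j c(t_j)\le OPT(S)$, because it is one root-to-leaf path and $OPT(S)$ is the max over all such paths.

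The crux is to bound the per-step drop $F(G_{j-1})-F(G_j)$ by $c(t_j)/\beta$, i.e. to transfer the reduction bound known on $S$ down to the subset $G_{j-1}$. Let $i^*$ be the highest-impurity outcome of $t_j$ on $S$, so $F(S)-F(S^{i^*}_{t_j})\le c(t_j)/\beta$ by the first paragraph. Applying supermodularity to $A=G_{j-1}$ and $B=S^{i^*}_{t_j}$, together with monotonicity (since $A\cup B\subseteq S$ forces $F(A\cup B)\le F(S)$), gives $F(G_{j-1})+F(S^{i^*}_{t_j})\le F(S)+F(G_{j-1}\cap S^{i^*}_{t_j})$, that is $F(G_{j-1})-F((G_{j-1})^{i^*}_{t_j})\le F(S)-F(S^{i^*}_{t_j})\le c(t_j)/\beta$. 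Since $G_j$ is the maximum-impurity child, $F(G_j)\ge F((G_{j-1})^{i^*}_{t_j})$ --- this inequality also absorbs the case where outcome $i^*$ is empty in $G_{j-1}$, using nonnegativity of $F$ and $F(\emptyset)=0$ --- so indeed $F(G_{j-1})-F(G_j)\le c(t_j)/\beta$.

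Finally I would telescope the per-step bounds along the path: $F(S)=\sum_j [F(G_{j-1})-F(G_j)]\le (1/\beta)\sum_j c(t_j)\le OPT(S)/\beta$, which rearranges to exactly $c(\tau)F(S)/(F(S)-F(S^q_\tau))\le OPT(S)$. I expect the main obstacle to be the third step: the greedy criterion $R$ is evaluated on the fixed root set $S$, whereas $D^*$ applies its tests to the shrinking sets $G_{j-1}\subsetneq S$, and it is precisely the monotone-supermodular inequality that ports the ``cheap outcome on $S$'' guarantee onto $G_{j-1}$. Lining up the choice of the maximum-impurity child with the outcome $i^*$ and correctly handling the empty-branch case is the delicate bookkeeping that makes this transfer go through.
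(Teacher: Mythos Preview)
Your proof is correct and follows essentially the same approach as the paper: bound the per-step impurity drop along a root-to-leaf path in $D^*$ via the greedy optimality of $\tau$ combined with the monotone/supermodular transfer inequality $F(S)-F(S^{i^*}_t)\ge F(R)-F(R^{i^*}_t)$, then telescope. The only cosmetic difference is that the paper descends by always choosing the outcome of largest impurity \emph{as measured on $S$} (so the transfer inequality applies directly to the chosen child), whereas you descend by largest impurity on the local set $G_{j-1}$ and then invoke $F(G_j)\ge F((G_{j-1})^{i^*}_{t_j})$; both choices yield the same telescoping bound.
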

\vspace{-.5cm}
\begin{proof}
Let $D^*\in D_{F}$ be a tree with optimal max-cost. Let $v$ be an arbitrarily chosen internal node in $D^*$, let $\gamma$ be the test associated with $v$ and let $R\subseteq S$ be the set of objects associated with the leaves of the subtree rooted at $v$. Let $i$ be such that $c(\tau)/(F(S)-F(S^i_\tau))$ is maximized and $j$ be such that $c(\gamma)/(F(S)-F(S^i_\gamma))$ is maximized. We then have:
\begin{align}
&\frac{c(\tau)}{F(S)-F(S^q_\tau)} \leq \frac{c(\tau)}{F(S)-F(S^i_\tau)} \notag \\
&\leq
\frac{c(\gamma)}{F(S)-F(S^j_\gamma)} \leq \frac{c(\gamma)}{F(R)-F(R^j_\gamma)}.\label{eq:lemma2}
\end{align}
The first inequality follows from the definition of $i$. The second inequality follows from the greedy choice at the root. To show the last inequality, we have to show $F(S)-F(S^j_\gamma)\geq F(R)-F(R^j_\gamma)$. This follows from the fact that $S^j_\gamma \cup R \subseteq S$ and $R^j_\gamma = S^j_\gamma \cap R$ and therefore $F(S)\geq F(S^j_\gamma \cup R) \geq F(S^j_\gamma)+F(R)-F(R^j_\gamma)$, where the first inequality follows from monotonicity and the second follows from the definition of supermodularity.

For a node $v$, let $S(v)$ be the set of objects associated with the leaves of the subtree rooted at $v$. Let $v_1,v_2,\dots,v_p$ be a root-to-leaf path on $D^*$ as follows: $v_1$ is the root of the tree, and for each $i=1,\dots,p-1$ the node $v_{i+1}$ is a child of $v_i$ associated with the branch of $j$ that maximizes $c(t_i)/(F(S)-F(S^j_{t_i}))$, where $t_i$ is the test associated with $v_i$. If follows from \eqref{eq:lemma2} that
\begin{equation}
\frac{[F(S(v_i))-F(S(v_{i+1}))]c(\tau)}{F(S)-F(S^q_\tau)}\leq c_{t_i}.
\end{equation}
Since the cost of the path from $v_1$ to $v_p$ is no larger than the max-cost of the $D^*$, we have that
\vspace{-.3cm}
\begin{align*}
& OPT(S) \geq \sum_{i=1}^{p-1}c_{t_i} \\
 & \geq \frac{c(\tau)}{F(S)-F(S^q_\tau)}\sum_{i=1}^{p-1}(F(S(v_i))-F(S(v_{i+1}))\\
&=\frac{c(\tau)(F(S)-F(S(v_p))}{F(S)-F(S^q_\tau)}=\frac{c(\tau)F(S)}{F(S)-F(S^q_\tau)}.
\end{align*}
\end{proof}
\vspace{-1.5cm}
Using Lemma \ref{lemma:OPT_Wlowerbound}, we can now state the main theorem of this section which bounds the cost of the greedily constructed tree.
\begin{theorem} \label{thm:logn}
{\textsc{GreedyTree}} constructs a decision tree achieving $O(\log n)$-factor approximation of the optimal max-cost in $D_{F}$ on the set $S$ of $n$ objects if  $F$ is non-negative, monotone, supermodular with $\log(F(S))=O(\log n)$.
\end{theorem}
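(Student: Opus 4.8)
The plan is to bound $Cost_F(S)$ by following the single root-to-leaf path in the greedy tree that realizes the max-cost, charging the cost of each test on that path against the \emph{relative} decrease in impurity it produces, and then collapsing those relative decreases into a harmonic sum of size $O(\log F(S))$. First I would unwind the recursion along this worst-case path. Let $S=S_0\supseteq S_1\supseteq\cdots\supseteq S_m$ be the sets encountered, where $\tau_k$ is the test selected by {\textsc{GreedyTree}} at $S_k$ and $S_{k+1}=(S_k)^{q_k}_{\tau_k}$ is the child attaining $\max_i Cost_F((S_k)^i_{\tau_k})$. Because {\textsc{GreedyTree}} recurses only while impurity is nonzero and $F$ is integer-valued, every internal node satisfies $F(S_k)\geq 1$ for $k<m$, while the terminal leaf has $F(S_m)=0$. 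By the definition of max-cost, $Cost_F(S)=\sum_{k=0}^{m-1}c(\tau_k)$.

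Next I would apply Lemma~\ref{lemma:OPT_Wlowerbound} at each node of the path, which gives $c(\tau_k)\,F(S_k)/(F(S_k)-F(S_{k+1}))\leq OPT(S_k)$, i.e. $c(\tau_k)\leq OPT(S_k)\,(F(S_k)-F(S_{k+1}))/F(S_k)$. The auxiliary fact I need here is $OPT(S_k)\leq OPT(S)$: restricting a max-cost-optimal tree for $S$ to the objects of $S_k\subseteq S$ yields a tree whose leaf-sets only shrink, so non-negativity and monotonicity of $F$ force each restricted leaf to keep zero impurity; the restricted tree thus lies in $D_F$ for $S_k$ and has max-cost at most $OPT(S)$. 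This is the place where the structural hypotheses on $F$ are genuinely used, and substituting it gives $c(\tau_k)\leq OPT(S)\,(F(S_k)-F(S_{k+1}))/F(S_k)$.

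Summing over the path yields $Cost_F(S)\leq OPT(S)\sum_{k=0}^{m-1}(F(S_k)-F(S_{k+1}))/F(S_k)$, so it remains to show this sum is $O(\log F(S))$. Writing $a_k=F(S_k)$, for each step with $a_{k+1}\geq 1$ I would invoke integrality of $F$ to compare $(a_k-a_{k+1})/a_k\leq\sum_{j=a_{k+1}+1}^{a_k}1/j$, since the right-hand side has exactly $a_k-a_{k+1}$ terms, each at least $1/a_k$. Telescoping these over $k=0,\dots,m-2$ gives at most $\sum_{j=1}^{F(S)}1/j\leq 1+\ln F(S)$, while the single terminal step from $a_{m-1}\geq 1$ to $a_m=0$ contributes exactly $a_{m-1}/a_{m-1}=1$. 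Hence the sum is at most $2+\ln F(S)$, and invoking $\log F(S)=O(\log n)$ gives $Cost_F(S)\leq (2+\ln F(S))\,OPT(S)=O(\log n)\,OPT(S)$, as claimed.

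I expect the harmonic-sum step to be the main obstacle: the naive estimate $\ln(a_0/a_m)$ diverges because the path terminates at zero impurity, so the argument must isolate the final drop to zero and handle it separately, and it must lean on integrality of $F$ to convert each relative decrement $(a_k-a_{k+1})/a_k$ into a genuine partial harmonic sum. The only other delicate point is the monotonicity-based justification of $OPT(S_k)\leq OPT(S)$ needed to propagate Lemma~\ref{lemma:OPT_Wlowerbound} from the root down the path.
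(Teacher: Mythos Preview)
Your proposal is correct and is essentially the paper's proof unrolled: both apply Lemma~\ref{lemma:OPT_Wlowerbound} at each node along the max-cost path, use $OPT(S_k)\le OPT(S)$ (the paper states this once as $OPT(S)\ge OPT(S^q_\tau)$ inside an induction), and collapse the relative impurity drops $(F(S_k)-F(S_{k+1}))/F(S_k)$ into an $O(\log F(S))$ bound. The only difference is cosmetic: the paper packages the argument as an induction and bounds each step via $x/(x+1)\le\log(1+x)$ to obtain $\log(F(S))+1$, whereas you sum the path explicitly and compare to a harmonic series using integrality; both are valid and give the same order, and your handling of the final drop to zero impurity is in fact a bit cleaner than the paper's separate base-case analysis.
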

\begin{proof}
\begin{align}
& \frac{Cost_{F}(S)}{OPT(S)} =\frac{c(\tau)+Cost_{F}(S^q_\tau)}{OPT(S)} \\
&\leq \frac{c(\tau)}{OPT(S)}+\frac{Cost_{F}(S^q_\tau)}{OPT(S^q_\tau)} \label{eq:thm1_1}
\\ & \leq \frac{F(S)-F(S^q_\tau)}{F(S)}+\frac{Cost_{F}(S^q_\tau)}{OPT(S^q_\tau)} \label{eq:thm1_2} \\
& \leq \log (\frac{F(S)}{F(S_\tau^q)})+\log (F(S_\tau^q))+1 \label{eq:thm1_4}\\
& = \log (F(S))+1 =O(\log n). \label{eq:thm1_5}
\end{align}
The inequality in \eqref{eq:thm1_1} follows from the fact that $OPT(S) \geq OPT(S^q_\tau)$. \eqref{eq:thm1_2} follows from Lemma \ref{lemma:OPT_Wlowerbound}. The first term in \eqref{eq:thm1_4} follows from the inequality $\frac{x}{x+1} \leq \log(1+x)$ for $x>-1$ and the second term follows from the induction hypothesis that for each $G\subset S$, ${Cost_{F}(G)}/{OPT(G)}\leq \log(F(G))+1$. If $F(G)=0$ for some set of objects $G$, we define ${Cost_{F}(G)}/{OPT(G)}=1$.

We can verify the base case of the induction as follows.
if $F(G)=\beta$, which is the smallest non-zero impurity of $F$ on subsets of objects $S$, we claim that the optimal decision tree chooses the test with the smallest cost among those that can reduce the impurity function $F$:
\begin{equation*}
OPT(G)=\min_{t|F(G^i_t)=0, \forall i\in \text{outcomes}} c(t).
\end{equation*}
Suppose otherwise, the optimal tree chooses first a test $t$ with a child node $G'$ such that $F(G')=\beta$ and later chooses another test $t'$ such that all the child nodes of $G'$ by $t'$ has zero impurity, then $t'$ could have been chosen in the first place to reduce all child nodes of $G$ to zero impurity by supermodularity of $F$ and therefore this cannot be the optimal ordering of tests.
On the other hand, $R(t)=\infty$ in {\textsc{GreedyTree}} for those test $t$ that cannot reduce impurity and $R(t)=c(t)$ for those tests that can. So the algorithm would pick the test among those that can reduce impurity and have the smallest cost. Thus, we have shown that ${Cost_{F}(G)}/{OPT(G)} = \log(F(G))+1=1$ for the base case.
\end{proof}

Given that $P\neq NP$, the optimal order approximation for the DFEP problem is $O(\log n)$, which is achieved by \textsc{GreedyTree}. This approximation is not dependent on a particular impurity function, but instead holds for any function which satisfies the assumptions. In Section \ref{sec:admissible}, we define a family of impurity functions that satisfy these assumptions.


\section{\emph{Admissible} Functions}\label{sec:admissible}
A fundamental element of constructing decision trees is the impurity function, which measures the disagreement of labels between a set of objects. Many impurity functions have been proposed for constructing decision trees, and the choice of impurity function can have a significant impact on the performance of the tree. In this section we examine the assumptions placed on the impurity function by Lemma \ref{lemma:OPT_Wlowerbound} and Theorem \ref{thm:logn} which we use to define a class of functions we call \emph{admissible} impurity functions and provide examples of \emph{admissible} impurity functions.
\begin{definition} 
A function $F$ of a set of objects is \emph{admissible} if it satisfies the following five properties: (1) Non-negativity: $F(G)\geq 0$ for any set of objects $G$; (2) Purity: $F(G)=0$ if $G$ consists of objects of the same class; (3) Monotonicity: $F(G)\geq  F(R), \forall R \subseteq G$;  (4) Supermodularty: $F(G\cup j)-F(G)\geq F(R\cup j) -F(R)$ for any $R\subseteq G$ and object $j\notin R$; (5) $\log(F(S))=O(\log n)$.
\end{definition}
A wide range of functions falls into the class of \emph{admissible} impurity functions. We propose a general family of polynomial functions which we show is \emph{admissible}. Given a set of objects $G$, $n_G^i$ denotes the number of objects in $G$ that belong to class $i$.
\begin{lemma} \label{lemma:polynomialAdmissible}
Suppose there are $k$ classes in $G$. Any polynomial function of $n_G^1,\dots,n_G^k$ with non-negative terms such that $n_G^1,\dots,n_G^k$ do not appear as singleton terms is \emph{admissible}. Formally, if
\begin{equation}\label{eq:poly}
F(G)=\sum_{i=1}^{M} \gamma_i (n_G^1)^{p_{i1}}(n_G^2)^{p_{i2}}\dots (n_G^k)^{p_{ik}},
\end{equation}
where $\gamma_i$'s are non-negative, $p_{ij}$'s are non-negative integers and for each $i$ there exists at least 2 non-zero $p_{ij}$'s, then $F$ is \emph{admissible}.
\end{lemma}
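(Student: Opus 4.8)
The plan is to exploit closure of the admissibility properties under non-negative linear combinations, so that it suffices to prove the claim for a single monomial $M(G)=(n_G^1)^{p_1}\cdots(n_G^k)^{p_k}$ in which at least two of the exponents $p_i$ are positive, and then to reassemble the general $F$ of \eqref{eq:poly} using the non-negative weights $\gamma_i$. Indeed, non-negativity, monotonicity and supermodularity are each expressed by inequalities that are linear in $F$ and are therefore stable under non-negative combinations; purity of the sum is immediate once each monomial vanishes on a pure set; and property (5) survives because a finite non-negative combination of bounded-degree monomials is again a polynomial of bounded degree in the counts. Thus the whole problem collapses to verifying the pointwise properties for one monomial.

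First I would dispatch the easy properties for the monomial $M$. Non-negativity is immediate since each count $n_G^i\ge 0$ and each $p_i\ge 0$. For purity, if $G$ is drawn entirely from a single class $a$ then $n_G^i=0$ for every $i\neq a$; since $M$ contains at least two factors with positive exponent, at least one such factor is an $n_G^i$ with $i\neq a$, forcing $M(G)=0$ — this is exactly where the ``no singleton terms'' hypothesis is used (a lone factor $n_G^i$ would not vanish on a pure set of class $i$). Monotonicity follows because enlarging $G$ can only increase each count $n_G^i$, and $M$ is a product of non-negative, coordinatewise non-decreasing powers. For property (5), each $n_S^i\le n$, so $M(S)\le n^{p_1+\cdots+p_k}$ and hence $\log M(S)=O(\log n)$ with the degree treated as a constant.

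The main obstacle is supermodularity, which I would recast through the discrete derivative. Fix an object $j$ of class $a$ and a set $H$ with $j\notin H$; adding $j$ increases only the count $n_H^a$, by one, so the marginal gain is
\[
M(H\cup j)-M(H)=\Bigl[(n_H^a+1)^{p_a}-(n_H^a)^{p_a}\Bigr]\prod_{i\neq a}(n_H^i)^{p_i}.
\]
Supermodularity asserts that this marginal gain does not decrease when $H$ grows from $R$ to $G$ (with $R\subseteq G$ and $j\notin G$), and since $R\subseteq G$ gives $n_R^i\le n_G^i$ coordinatewise, it suffices to show the right-hand side is non-decreasing in each count. The factors $(n_H^i)^{p_i}$ for $i\neq a$ are plainly non-decreasing in $n_H^i$, so the crux is that the forward difference $x\mapsto(x+1)^{p_a}-x^{p_a}$ is non-decreasing for integers $x\ge 0$.

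This last point is precisely the convexity of $x\mapsto x^{p_a}$: the forward difference of a convex function is itself non-decreasing (equivalently, the second forward difference of $x^{p_a}$ is non-negative). I expect this convexity/second-difference check to be the only genuinely nontrivial computation. Once it is in hand, the product with the monotone remaining factors shows the marginal gain is coordinatewise non-decreasing, hence monotone as the set grows, giving supermodularity for each monomial; the linearity reduction of the first paragraph then assembles these into the full statement for $F$.
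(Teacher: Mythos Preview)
Your argument is correct and is essentially the paper's proof: both compute the marginal gain from adding an object of class $a$, observe that only the factor in $n^a$ changes, and then show the resulting expression is coordinatewise non-decreasing in the counts. The only cosmetic difference is that you first reduce to a single monomial and phrase the key step as convexity of $x\mapsto x^{p_a}$, whereas the paper works with the full sum and uses the binomial expansion of $(n_R^j+1)^{p_{ij}}$ to exhibit the difference as a sum of non-negative products dominated term-by-term by the $G$ version; these are two ways of stating the same computation.
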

\begin{proof}
Properties (1),(2),(3) and (5) are obviously true. To show $F$ is supermodular, suppose $R\subset G$ and object $\hat{j} \notin R$ and $\hat{j}$ belongs to class $j$, we have
\begin{align*}
& F(R\cup \hat{j})-F(R)\\
& =\sum_{i\in I_j} \gamma_i [(n_R^1)^{p_{i1}}\dots(n_R^j+1)^{p_{ij}}\dots (n_R^k)^{p_{ik}}-\\
& \qquad (n_R^1)^{p_{i1}}\dots(n_R^j)^{p_{ij}}\dots (n_R^k)^{p_{ik}}]\\
& \leq \sum_{i\in I_j} \gamma_i [(n_G^1)^{p_{i1}}\dots(n_G^j+1)^{p_{ij}}\dots (n_G^k)^{p_{ik}}-\\
& \qquad (n_G^1)^{p_{i1}}\dots(n_G^j)^{p_{ij}}\dots (n_G^k)^{p_{ik}}]\\
&=F(G\cup \hat{j})-F(G),
\end{align*}
where the first summation index set $I_j$ is the set of terms that involve $n_R^j$. The inequality follows because $(n_R^j+1)^{p_{ij}}$ can be expanded so the negative term can be canceled, leaving a sum-of-products form for $R$, which is term-by-term dominated by that of $G$.
\end{proof}

A special case of polynomial impurity function is the previously proposed Pairs function $P(G)$ \cite{TradingOffWorstExpectedCost,DiagnosisDeterminationSimultaneous,MoshkovGreedyAlgorithmwithWeightsforDecisionTreeConstruction}. Two objects $(s_1,s_2)$ are defined as a pair if they are of different classes, with the Pairs function $P(G)$ equal to the total number of pairs in the set $G$:
\vspace{-.2cm}
\begin{equation*}
P(G)=\sum_{i=1}^{k-1}\sum_{j=i+1}^{k} n_G^in_G^j,
\end{equation*}
where $k$ is the number of distinct classes in set $G$.
\begin{corollary}\label{cor:pairsAdmissible}
The Pairs impurity function is \emph{admissible}.
\end{corollary}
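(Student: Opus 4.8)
The plan is to recognize the Pairs function as a special case of the polynomial family already shown to be admissible in Lemma \ref{lemma:polynomialAdmissible}, so that the corollary follows essentially by inspection. First I would write out
\begin{equation*}
P(G)=\sum_{i=1}^{k-1}\sum_{j=i+1}^{k} n_G^i n_G^j
\end{equation*}
and match it against the general form \eqref{eq:poly}. Reading off the coefficients, every term is a monomial $n_G^i n_G^j$ with $i<j$, so each term has $\gamma=1\geq 0$ and exponent pattern $p_{i'}=1$ for $i'\in\{i,j\}$ and $p_{i'}=0$ otherwise.

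The key observation to verify is that each term contains exactly two nonzero exponents, namely $p_i=p_j=1$. This is precisely the hypothesis of Lemma \ref{lemma:polynomialAdmissible}, which requires at least two nonzero $p_{ij}$'s per term (equivalently, no class count appears as a singleton/linear term). Since the Pairs function is a sum of such products with unit coefficients and each product couples exactly two distinct classes, it satisfies all the structural requirements of the lemma.

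I would then simply invoke Lemma \ref{lemma:polynomialAdmissible} to conclude that $P(G)$ inherits all five admissibility properties (non-negativity, purity, monotonicity, supermodularity, and $\log(P(S))=O(\log n)$), completing the proof. I do not anticipate any real obstacle here: the entire content of the corollary is the pattern-matching step, and there is no separate computation to perform once the lemma is in hand. If anything, the only point worth a sentence is confirming property (5), which holds because $P(S)\leq \binom{n}{2}=O(n^2)$ and hence $\log(P(S))=O(\log n)$; but this too is already covered by the lemma's conclusion, so the corollary is immediate.
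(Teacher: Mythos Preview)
Your proposal is correct and matches the paper's approach exactly: the corollary is stated without a separate proof because Pairs is transparently a special case of the polynomial family in Lemma~\ref{lemma:polynomialAdmissible}, and your pattern-matching (each term $n_G^i n_G^j$ has unit coefficient and exactly two nonzero exponents) is precisely the verification the paper leaves implicit.
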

As a corollary of Theorem \ref{thm:logn} and Corollary \ref{cor:pairsAdmissible}, we see that $O(\log n)$ approximation for Pairs and outcome-dependent cost holds for multiple test outcomes as well, extending the binary outcome setting shown in \cite{TradingOffWorstExpectedCost}.

Another family of \emph{admissible} impurity functions is the Powers function.
\begin{corollary}\label{cor:powerW}
Powers function
\begin{equation}\label{eq:powerFunc}
F(G)=(\sum_{i=1}^{k} n_G^i)^l - \sum_{i=1}^{k}(n_G^i)^l
\end{equation}
is \emph{admissible} for $l=2,3,\dots$.
\end{corollary}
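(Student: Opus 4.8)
The plan is to reduce this directly to Lemma \ref{lemma:polynomialAdmissible} by expanding the Powers function into a polynomial in $n_G^1,\dots,n_G^k$ and verifying that it has exactly the structure required there: non-negative coefficients, non-negative integer exponents, and no variable appearing as a singleton term. Writing $N=\sum_{i=1}^k n_G^i$ for the total number of objects in $G$, the first step is to apply the multinomial theorem to expand $N^l$:
$$
N^l=\sum_{\substack{a_1+\cdots+a_k=l\\ a_i\geq 0}}\binom{l}{a_1,\dots,a_k}\prod_{i=1}^k (n_G^i)^{a_i}.
$$

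The key observation is that the only monomials in this expansion in which a single variable carries all of the exponent mass are those with $a_i=l$ for exactly one index $i$ (and all other exponents zero); since the exponents must sum to $l$, no other monomial is a pure power of a single variable. Each such term equals $(n_G^i)^l$ and appears with multinomial coefficient $\binom{l}{0,\dots,l,\dots,0}=1$. Hence subtracting $\sum_{i=1}^k (n_G^i)^l$ cancels precisely these singleton terms and nothing else, leaving
$$
F(G)=\sum_{\substack{a_1+\cdots+a_k=l\\ a_i\geq 0,\ \#\{i:a_i>0\}\geq 2}}\binom{l}{a_1,\dots,a_k}\prod_{i=1}^k (n_G^i)^{a_i}.
$$

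Once $F$ is written in this form, admissibility follows immediately from Lemma \ref{lemma:polynomialAdmissible}: the multinomial coefficients are positive integers (so all coefficients $\gamma_i$ are non-negative), the exponents $a_i$ are non-negative integers, and by construction every surviving monomial involves at least two distinct variables, so no $n_G^i$ occurs as a singleton term. Therefore $F$ satisfies the hypotheses of the lemma and is \emph{admissible}.

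I expect the only point requiring genuine care — rather than a real obstacle — to be the bookkeeping in the cancellation step: confirming that the subtracted terms $(n_G^i)^l$ correspond one-to-one with the coefficient-one single-variable monomials of $N^l$, so that the subtraction removes exactly the forbidden singleton terms and leaves every remaining monomial with at least two active variables. Everything else — non-negativity, purity, monotonicity, supermodularity, and the logarithmic growth condition $\log(F(S))=O(\log n)$ — is inherited wholesale from Lemma \ref{lemma:polynomialAdmissible} and needs no separate argument.
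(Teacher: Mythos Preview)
Your proposal is correct and follows exactly the route the paper intends: the corollary is stated without proof precisely because the multinomial expansion of $(\sum_i n_G^i)^l$ minus the pure powers $\sum_i (n_G^i)^l$ is immediately seen to be a polynomial of the form in Lemma~\ref{lemma:polynomialAdmissible}. Your careful verification that the subtracted terms cancel exactly the coefficient-one singleton monomials is the only nontrivial bookkeeping, and you have handled it correctly.
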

Note Pairs can be viewed as a special case of Powers function when $l=2$.
An important property of the Powers impurity functions is the fact that for any power $l$, the function is zero only if the set of objects all belong to the same class. As a result, using any of these Powers impurity function in {\textsc{GreedyTree}} results in an \emph{error-free} tree with near optimal cost.

Another interesting \emph{admissible} impurity used in Section \ref{sec:trade-off} is the hinged-Pairs function defined:
\vspace{-.1cm}
\begin{equation}\label{eq:hingedPairs}
P_\alpha(G)=\sum_{i\neq j}[[n^i_G-\alpha]_+[n^j_G-\alpha]_+-\alpha^2]_+,
\end{equation}
where $[x]_+=\max(x,0)$. This function differs from the Powers impurity function due to the fact that for a $\alpha>0$, the function $P_\alpha(G)=0$ need not imply that all objects in $G$ belong to the same class. In the next section, we will discuss how this allows for trees to be constructed incorporating classification error.
We include the proof of the following lemma in the Appendix.
\begin{lemma}\label{lemma:F_admissible_multi}
In the multi-class setting, $P_\alpha(G)$ is \emph{admissible}.
\end{lemma}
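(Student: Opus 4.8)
The plan is to verify the five defining properties of admissibility for $P_\alpha$, disposing of the first three and the fifth by a direct reduction to a single bivariate kernel and reserving the real work for supermodularity. Write $u(t)=[t-\alpha]_+$ and $\psi(x,y)=[\,u(x)u(y)-\alpha^2\,]_+$, so that $P_\alpha(G)=\sum_{i\neq j}\psi(n_G^i,n_G^j)$. Non-negativity (1) is immediate, since every summand is of the form $[\,\cdot\,]_+\ge 0$. Purity (2) holds because if $G$ is pure then all but one class count vanish, so for every ordered pair $i\neq j$ at least one factor is $u(0)=[-\alpha]_+=0$ and each summand is $[-\alpha^2]_+=0$. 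For the growth bound (5) I would use $\psi(x,y)\le u(x)u(y)\le xy$, whence $P_\alpha(S)\le\sum_{i\neq j}n_S^i n_S^j\le n^2$ and $\log P_\alpha(S)=O(\log n)$. Monotonicity (3) follows because $\psi$ is coordinatewise non-decreasing (composition of the non-decreasing maps $u$, a product of non-negatives, and $z\mapsto[z-\alpha^2]_+$), so $R\subseteq G$, which forces $n_R^i\le n_G^i$ for every class, cannot increase any summand.

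The crux is supermodularity (4), which I would first reduce to a statement about $\psi$ alone. Adding an object $\hat{\jmath}$ of class $c$ leaves every count except $n^c$ fixed, so only the pairs involving $c$ change, giving
\[
P_\alpha(G\cup\hat{\jmath})-P_\alpha(G)=2\sum_{b\neq c}\bigl[\psi(n_G^c+1,n_G^b)-\psi(n_G^c,n_G^b)\bigr],
\]
and the analogous expression for $R$. Since $R\subseteq G$ yields $n_R^c\le n_G^c$ and $n_R^b\le n_G^b$ for every $b$, it suffices to prove, term by term, that the marginal increment $g(x,y):=\psi(x+1,y)-\psi(x,y)$ is non-decreasing in each argument on the integer lattice. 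Increasing $x$ from $n_R^c$ to $n_G^c$ needs $g$ non-decreasing in $x$, i.e.\ discrete convexity of $\psi$ in $x$; increasing $y$ from $n_R^b$ to $n_G^b$ needs $g$ non-decreasing in $y$, i.e.\ $\psi$ having non-decreasing differences (lattice supermodularity).

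The main obstacle is the two nested hinges in $\psi$, whose kinks make naive second-difference bookkeeping delicate; I would bypass case analysis with a composition argument. Set $h(x,y)=u(x)u(y)$. Since $u$ is non-negative, non-decreasing, and discretely convex, elementary difference identities give that $h$ is coordinatewise non-decreasing, convex in each variable (its second $x$-difference equals $u(y)\,[u(x+1)-2u(x)+u(x-1)]\ge 0$), and supermodular (its mixed difference equals $[u(x+1)-u(x)]\,[u(y+1)-u(y)]\ge 0$). Now $\psi=\rho\circ h$ with $\rho(z)=[z-\alpha^2]_+$ convex and non-decreasing, so I would establish two elementary discrete composition lemmas: (a) if $h$ is convex in $x$ and $\rho$ is convex non-decreasing, then $\rho\circ h$ is convex in $x$, using $h(x,y)\le\tfrac12[h(x-1,y)+h(x+1,y)]$ together with monotonicity and convexity of $\rho$; and (b) if $h$ is coordinatewise non-decreasing and supermodular and $\rho$ is convex non-decreasing, then $\rho\circ h$ is supermodular, proved by writing $a,b,c,d$ for the four corner values of $h$ and chaining $\rho(d)-\rho(c)\ge\rho(c+(b-a))-\rho(c)\ge\rho(a+(b-a))-\rho(a)=\rho(b)-\rho(a)$ via $d\ge c+(b-a)$ and $c\ge a$. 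Applying (a) and (b) shows $\psi$ is convex in each variable and supermodular, which is exactly what the term-by-term reduction of the previous paragraph requires, completing the verification of (4) and hence of admissibility.
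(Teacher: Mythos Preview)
Your proof is correct and takes a genuinely different route from the paper's. The paper first isolates the binary kernel $\psi(x,y)=[[x-\alpha]_+[y-\alpha]_+-\alpha^2]_+$ and establishes its supermodularity by an exhaustive case analysis on which of $P_\alpha(R)$, $P_\alpha(R\cup j)$, $P_\alpha(G)$ vanish (four cases, each handled by a direct inequality); the multiclass statement then follows because every summand in $P_\alpha$ is a copy of this binary kernel. You make the same term-by-term reduction, but in place of case analysis you factor $\psi=\rho\circ h$ with $h(x,y)=[x-\alpha]_+[y-\alpha]_+$ and $\rho(z)=[z-\alpha^2]_+$, and prove two short composition lemmas: a convex non-decreasing outer function preserves both coordinate convexity and lattice supermodularity of a coordinatewise-monotone inner function. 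This is cleaner and more robust---the identical argument would certify admissibility for any impurity built by composing non-decreasing convex scalar maps with products of such maps, yielding a family of hinged variants for free. The paper's approach, by contrast, is more elementary and self-contained, requiring no auxiliary lemmas. One small point worth making explicit in your write-up: your midpoint-convexity step in lemma~(a) uses that $\rho$ is convex on~$\mathbb{R}$, not merely discretely convex, which is of course true for $[z-\alpha^2]_+$ but should be stated.
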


\textbf{Impurity Function Selection:} While all \emph{admissible} impurity functions enjoy the $O(\log n)$ approximation of the optimal max-cost, they lead to different trees depending on the problems. To illustrate this point, consider the toy example in Figure \ref{fig:demo}. A set $G$ has 30 objects in class 1 (circles) and 30 objects in Class 2 (triangles). Two tests $t_1$ and $t_2$ are available to the algorithm. Test $t_1$ separates 20 objects of Class 2 from the rest of the objects while $t_2$ evenly divides the objects into halves with equal number of objects from Class 1 and Class 2 in either half. Intuitively, $t_2$ is not a useful test from a classification point of view because it does not separate objects based on class at all. This is reflected in the right plot of Figure \ref{fig:demo}: choosing $t_2$ increases cost but does not reduce classification error while choosing $t_1$ reduces the error to $\frac{1}{6}$. If the impurity function chosen is the Pairs function, test $t_2$ will be chosen due to the fact that Pairs biases towards tests with balanced test outcomes. In contrast, the hinged-Pairs function leads to test $t_1$, and therefore may be preferable in this case (for more details on this example see the Appendix). Although both impurity functions are \emph{admissible} and return trees with near optimal guarantees, empirical performance can differ greatly and is strongly dependent on the structure of the data. In practice, we find that choosing the tree with the lowest classification error across a variety of impurity functions yields improved performance compared to a single impurity function strategy.

\begin{figure}
\centering
\includegraphics[trim=2.5cm 6cm 2.5cm 3cm,angle=0,width=0.4\textwidth]{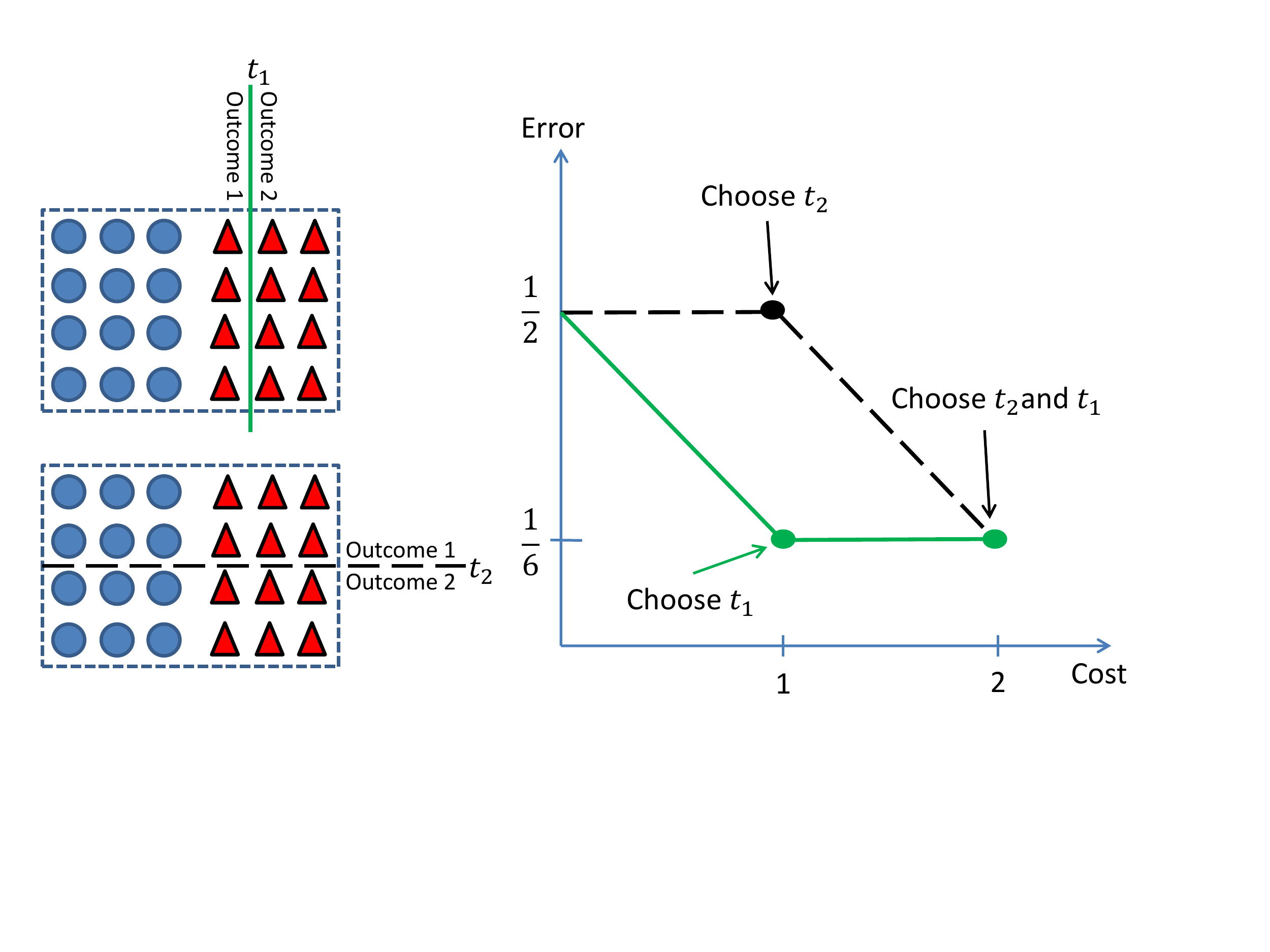}
\caption{Illustration of different impurity functions for different greedy choice of tests. The left two figures above show the test outcomes of test $t_1$ and $t_2$. The right figure shows the classification error against cost (number of tests). Here using Pairs leads to choosing $t_2$ because it prefers balanced splits; using the hinged-Pairs leads to choosing $t_1$, which is better from an error-cost trade-off point of view.} \label{fig:demo}
\end{figure}

\section{Trade-off Bounds} \label{sec:trade-off}
Up to this point, we have focused on constructing \emph{error-free} trees. Unfortunately, the max-cost criteria is highly sensitive to outliers, and therefore often yields trees with unnecessarily large maximum depth to accommodate a small subset of outliers in the data set. Refer to the synthetic experiment in Section \ref*{sec:experiments} for such an example. To overcome the sensitivity to outliers, we present an approach to constructing near optimal trees with non-zero error rates.


\paragraph{Early-stopping:}
Instead of requiring all leaves to have zero impurity ($F(L)=0$) in Algorithm \ref{algo:GreedyTree}, we can stop the recursion as soon as all leaves have impurity below a threshold $\delta$ ($F(L)\leq \delta$). This will allow error and cost trade-off. Let $D_{F:\delta}$ denote the set of trees with $F(L)\leq \delta$ for all leaves $L$ and let $OPT_{F:\delta}(S)$  denote the optimal max-cost among all trees in $D_{F:\delta}$.

Similar to the \emph{error-free} setting, the $O(\log n)$ approximation of the optimal max-cost still holds for early stopping as shown next. The proofs of Lemma \ref{lemma:OPT_WlowerboundEta} and Theorem \ref{thm:lognEta} are similar to that of Lemma \ref{lemma:OPT_Wlowerbound} and Theorem \ref{thm:logn} and we include them in the Appendix.
\begin{lemma}\label{lemma:OPT_WlowerboundEta}
Let $F$ be an \emph{admissible} function and $\tau$ is the first test chosen by {\textsc{GreedyTree}} on the set of objects $S$, then
\begin{equation*}
c(\tau)(F(S)-\delta)/(F(S)-F(S^q_\tau)) \leq OPT_{F:\delta}(S).
\end{equation*}
\end{lemma}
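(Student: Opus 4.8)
The plan is to replay the proof of Lemma~\ref{lemma:OPT_Wlowerbound} almost verbatim, tracking the single place where the early-stopping relaxation changes the bound. Since an \emph{admissible} $F$ is in particular monotone (property (3)) and supermodular (property (4)), every ingredient used in the error-free argument remains available. First I would fix $D^*\in D_{F:\delta}$ to be a tree of optimal max-cost for the relaxed family, and observe that whenever {\textsc{GreedyTree}} actually selects a first test $\tau$ we must have $F(S)>\delta$, so the root of $D^*$ is genuinely internal and $D^*$ contains at least one root-to-leaf path of positive length.

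Next I would re-derive the chain of inequalities \eqref{eq:lemma2}. The crucial observation is that its proof never uses purity of the leaves: the first inequality is the definition of the maximizing branch $q$, the second is the greedy selection rule at the root (minimizing $R(t)$), and the third is exactly the supermodularity-plus-monotonicity estimate $F(S)-F(S^j_\gamma)\geq F(R)-F(R^j_\gamma)$. All three survive unchanged, so \eqref{eq:lemma2} continues to hold at every internal node $v$ of $D^*$ with associated object set $R$ and test $\gamma$.

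Then I would construct the same greedy root-to-leaf path $v_1,\dots,v_p$ in $D^*$, where $v_{i+1}$ is the child maximizing $c(t_i)/(F(S)-F(S^j_{t_i}))$, obtain the per-edge bound $[F(S(v_i))-F(S(v_{i+1}))]\,c(\tau)/(F(S)-F(S^q_\tau))\leq c_{t_i}$ from \eqref{eq:lemma2}, bound $OPT_{F:\delta}(S)\geq\sum_{i=1}^{p-1}c_{t_i}$ by the path cost, and telescope $\sum_{i=1}^{p-1}(F(S(v_i))-F(S(v_{i+1})))=F(S)-F(S(v_p))$.

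The only departure from the error-free case is the terminal value $F(S(v_p))$. Since $v_p$ is a leaf of $D^*\in D_{F:\delta}$, we now have $F(S(v_p))\leq\delta$ rather than $=0$, whence $F(S)-F(S(v_p))\geq F(S)-\delta$. Because the factor $c(\tau)/(F(S)-F(S^q_\tau))$ is non-negative (costs are non-negative and the greedy test strictly reduces impurity, so the denominator is positive), this inequality is preserved after multiplication, yielding $OPT_{F:\delta}(S)\geq c(\tau)(F(S)-\delta)/(F(S)-F(S^q_\tau))$. I do not anticipate a genuine obstacle; the one point to verify carefully is that replacing the leaf value $0$ by its upper bound $\delta$ moves the estimate in the direction that keeps it a valid \emph{lower} bound on $OPT_{F:\delta}(S)$, which it does.
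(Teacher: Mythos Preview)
Your proposal is correct and follows exactly the route the paper indicates: it states that the proof of Lemma~\ref{lemma:OPT_WlowerboundEta} is ``similar to that of Lemma~\ref{lemma:OPT_Wlowerbound},'' and the only modification is precisely the one you identify, namely replacing $F(S(v_p))=0$ by $F(S(v_p))\leq\delta$ at the terminal leaf of the telescoping path, which weakens the final bound from $c(\tau)F(S)/(F(S)-F(S^q_\tau))$ to $c(\tau)(F(S)-\delta)/(F(S)-F(S^q_\tau))$. Your additional remarks that $F(S)>\delta$ forces the root of $D^*$ to be internal and that the denominator is positive are useful sanity checks that the paper leaves implicit.
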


\begin{theorem} \label{thm:lognEta}
 {\textsc{GreedyTree}} constructs a decision tree achieving $O(\log n)$-factor approximation of the optimal max-cost in $D_{F:\delta}$ on the set $S$ of $n$ objects if  $F$ is \emph{admissible}.
\end{theorem}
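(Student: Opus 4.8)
The plan is to mirror the induction behind Theorem~\ref{thm:logn}, substituting Lemma~\ref{lemma:OPT_WlowerboundEta} for Lemma~\ref{lemma:OPT_Wlowerbound} and shifting every impurity value by the threshold $\delta$. As before, let $\tau$ be the first test chosen by {\textsc{GreedyTree}} and let $q$ index the outcome whose subtree attains the max-cost, so that $Cost_F(S)=c(\tau)+Cost_F(S^q_\tau)$, except that now every branch $G$ with $F(G)\le\delta$ is a leaf. First I would note that the optimum is monotone under restriction, $OPT_{F:\delta}(S^q_\tau)\le OPT_{F:\delta}(S)$: restricting an optimal tree for $S$ to the objects of $S^q_\tau$ yields a feasible tree in $D_{F:\delta}$ (each surviving leaf only shrinks, so its impurity stays $\le\delta$ by monotonicity) of no larger max-cost. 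Combining this with Lemma~\ref{lemma:OPT_WlowerboundEta} gives
\begin{equation*}
\frac{Cost_F(S)}{OPT_{F:\delta}(S)}\le\frac{F(S)-F(S^q_\tau)}{F(S)-\delta}+\frac{Cost_F(S^q_\tau)}{OPT_{F:\delta}(S^q_\tau)}.
\end{equation*}

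The right induction hypothesis is the \emph{shifted} bound $Cost_F(G)/OPT_{F:\delta}(G)\le\log(F(G)-\delta)+1$, valid for every $G$ with $F(G)>\delta$. With this choice the inductive step closes just as in Theorem~\ref{thm:logn}: setting $x=(F(S)-F(S^q_\tau))/(F(S^q_\tau)-\delta)$, the elementary inequality $x/(1+x)\le\log(1+x)$ reads
\begin{equation*}
\frac{F(S)-F(S^q_\tau)}{F(S)-\delta}\le\log\frac{F(S)-\delta}{F(S^q_\tau)-\delta},
\end{equation*}
so the first term together with $\log(F(S^q_\tau)-\delta)+1$ telescopes to $\log(F(S)-\delta)+1$. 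Since $F(S)-\delta<F(S)$ and admissibility gives $\log F(S)=O(\log n)$, this is the claimed $O(\log n)$ factor. Note that the unshifted hypothesis $\log F(G)+1$ does \emph{not} close the step, which is why subtracting $\delta$ inside the logarithm is essential.

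The main obstacle is the boundary case, the analogue of the $F(G)=1$ base case of Theorem~\ref{thm:logn} but genuinely more delicate because of the threshold. The telescoping above requires $F(S^q_\tau)>\delta$; when the worst-cost branch first falls to $F(S^q_\tau)\le\delta$ it becomes a leaf, $Cost_F(S^q_\tau)=0$, and $\log(F(S^q_\tau)-\delta)$ is undefined, so this terminal reduction must be bounded on its own by $(F(S)-F(S^q_\tau))/(F(S)-\delta)$. I would try to control it exactly as the base case of Theorem~\ref{thm:logn}: use integrality of $F$ together with the supermodularity argument to show that on a set whose impurity just exceeds $\delta$ the greedy choice and the optimal tree coincide up to a constant. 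The delicate point—and where I expect to spend the most care—is confirming that this single step across the threshold contributes only $O(\log n)$ to the final bound rather than a term growing with $\delta$ (the greedy criterion $R(t)$ normalizes by total impurity reduction and can favor an expensive test that over-shoots far below $\delta$ over a cheap test that just clears it); this is precisely the part that a proof ``by analogy'' to Theorem~\ref{thm:logn} must treat with care.
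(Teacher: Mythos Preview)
Your plan mirrors exactly what the paper says it does (it declares the proof ``similar to'' that of Theorem~\ref{thm:logn}): swap in Lemma~\ref{lemma:OPT_WlowerboundEta}, run the same induction with the shifted hypothesis $\log(F(G)-\delta)+1$, and telescope via $x/(1+x)\le\log(1+x)$. That part is correct, and you are right that the unshifted hypothesis does not close the step.

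The boundary case you flag is the real content, and the resolution you sketch does not actually close it. The supermodularity argument from the base case of Theorem~\ref{thm:logn} does show that at a node $G$ with $F(G)=\delta+1$ a \emph{single} test already suffices for $OPT_{F:\delta}(G)$; but it does \emph{not} force {\textsc{GreedyTree}} to pick that cheap test when the criterion $R(t)=\max_i c(t)/(F(G)-F(G^i_t))$ is left unmodified. Concretely, a unit-cost test landing every child at $F=\delta$ has $R=1$, while a test of cost $\delta$ landing every child at $F=0$ has $R=\delta/(\delta+1)<1$; greedy prefers the latter, and the ratio at that node is $\Theta(\delta)$, not $O(1)$. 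Since $\delta$ can be polynomial in $n$, the $O(\log n)$ claim does not follow from early stopping alone. The clean rescue is to observe that the truncated function $\tilde F(G)=[F(G)-\delta]_+$ is itself admissible (non-negativity, purity, and monotonicity are immediate; supermodularity survives the shift-and-truncate by a short case check; and $\log\tilde F(S)\le\log F(S)=O(\log n)$), with $D_{\tilde F}=D_{F:\delta}$. Running {\textsc{GreedyTree}} on $\tilde F$ and invoking Theorem~\ref{thm:logn} verbatim then gives the result. Operationally this just caps the denominator of $R(t)$ at $F(G)-\delta$ whenever a child falls below the threshold, which is precisely the modification that blocks the overshoot you were worried about.
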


\paragraph{Hinged-Pairs:}
Similar to early-stopping, we can also use the hinged-Pairs $P_{\alpha}$ \eqref{eq:hingedPairs} with $\alpha>0$ in {\textsc{GreedyTree}} to allow error-cost trade-off.
We first establish an error upper bound for trees in $D_{P_{\alpha}:0}$. 
\begin{lemma}\label{lemma8}
For a multi-class input set $S$ with $k$ classes, the classification error of any tree in $D_{P_{\alpha}:0}$ with $l$ leaves is bounded by $k(k-1)l\epsilon$, where we set $\alpha=\epsilon n$.
\end{lemma}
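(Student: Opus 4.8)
The plan is to translate membership in $D_{P_{\alpha}:0}$ into a bound on the class-counts at each leaf, bound the per-leaf misclassification, and then sum over the $l$ leaves. First I would observe that every summand of $P_\alpha$ in \eqref{eq:hingedPairs} is nonnegative, so the leaf condition $P_\alpha(L)=0$ is equivalent to the family of pairwise constraints
\begin{equation*}
[n^i_L-\alpha]_+\,[n^j_L-\alpha]_+ \leq \alpha^2 \quad \text{for every pair of distinct classes } i\neq j.
\end{equation*}
This is the only place the hinged-Pairs structure enters, and it is what makes $\alpha>0$ buy us slack relative to the error-free Powers case.

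The key structural step is to show that at most one class can be ``large'' in any leaf. If two distinct classes $i,j$ both had $n^i_L>2\alpha$ and $n^j_L>2\alpha$, then each hinged factor would satisfy $[n^i_L-\alpha]_+ > \alpha$ and $[n^j_L-\alpha]_+ > \alpha$, so their product would exceed $\alpha^2$, contradicting the constraint above. Hence in every leaf $L$ at most one class exceeds $2\alpha$; since the majority class is the one with the largest count, every non-majority class has at most $2\alpha$ objects. (I would handle the hinge thresholds explicitly: if the majority count is itself $\leq\alpha$ then all classes are trivially $\leq\alpha$, and otherwise the argument just given applies to each non-majority class paired with the majority.)

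Labeling each leaf by its majority class, the misclassified objects are precisely those in the $k-1$ non-majority classes, so the per-leaf error count is at most $2\alpha(k-1)$. Summing over the $l$ leaves and substituting $\alpha=\epsilon n$ gives a total misclassified count of at most $2(k-1)l\,\epsilon n$; normalizing by $n$ to obtain the misclassification rate yields $2(k-1)l\epsilon$, which is $\leq k(k-1)l\epsilon$ for all $k\geq 2$, establishing the stated bound.

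The main obstacle is not computational but definitional: being careful with the $[\cdot]_+$ thresholds so the per-class bound $n^j_L\leq 2\alpha$ holds uniformly across all cases, and making explicit that ``classification error'' is the misclassification \emph{rate} relative to $n$ (so the factor $\alpha=\epsilon n$ cancels) under majority-vote leaf labeling. Once those conventions are pinned down the inequalities are immediate, and in fact the argument delivers the slightly sharper constant $2(k-1)$ in place of $k(k-1)$.
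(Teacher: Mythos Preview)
Your argument is correct, and in fact yields the sharper constant $2(k-1)$ in place of $k(k-1)$, but the route differs from the paper's. The paper fixes the majority class $j$ and, for any non-majority class $i$ with $n^i_L>\alpha$, expands the constraint $(n^i_L-\alpha)(n^j_L-\alpha)\le\alpha^2$ to $n^i_L n^j_L\le\alpha(n^i_L+n^j_L)\le\alpha\, n_L$; then it invokes the pigeonhole fact $n^j_L\ge n_L/k$ to conclude $n^i_L\le k\alpha$. Summing over the $k-1$ non-majority classes and the $l$ leaves gives exactly $k(k-1)l\epsilon$. Your approach instead uses a pure threshold argument: two classes cannot simultaneously exceed $2\alpha$, so every non-majority class is at most $2\alpha$. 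This is more elementary---it never appeals to the size of the majority class relative to $n_L$---and tightens the per-class bound from $k\alpha$ to $2\alpha$. The price is that the paper's inequality $n^i_L n^j_L\le\alpha\, n_L$ is reused verbatim in the refined bound of Lemma~\ref{lemma9}, so its derivation there is not wasted; your threshold argument would need a separate step to recover that form if you later wanted the $\eta$-dependent refinement.
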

\begin{proof}
Suppose $j$ is the largest class in leaf $L$.
For $i\neq j$, if $n_L^{i}>\alpha$, we have $\max(n_L^{i}n_L^{j}-\alpha (n_L^{i}+n_L^{j}),0)=0$, which implies $n_L^{i}n_L^{j}\leq \alpha n_L$. So
\begin{equation*}
n_L^{i}\leq \frac{kn_L^{i}n_L^{j}}{n_L}\leq k\alpha = k\epsilon n.
\end{equation*}
If $n_L^{i}\leq \alpha$, we have $n_L^{i}\leq \epsilon n\leq k \epsilon n$.
So for any leaf $L$ we have $\frac{\sum_{i\neq j}n_L^{i}}{n}\leq k(k-1) \epsilon$. The overall error bound thus follows.
\end{proof}

Often in practice a tree may contain a relatively large number of leaves but only a small fraction of them contain most of the objects. A more refined upper bound on the error is given by the following lemma, which we prove in the Appendix.
\begin{lemma}\label{lemma9}
Consider a multi-class input set $S$ with $k$ classes and $\alpha=\epsilon n$. For any tree $T\in D_{P_\alpha:0}$ with $l$ leaves, given any $\eta\in [0,1]$, let $l_\eta$ be the smallest integer such that the largest $l_\eta$ leaves of $T$ have more than $1-\eta$ of the total number of objects $n$. Then the classification error is bounded by $k(k-1)l_\eta \epsilon + \frac{k-1}{k}\eta$.
\end{lemma}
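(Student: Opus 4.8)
The plan is to partition the $l$ leaves of $T$ into the $l_\eta$ largest ones, call this set $H$, and the remaining leaves $\bar H$, and then to bound the misclassification contributed by each group using a \emph{different} estimate before summing. Recall that in any leaf $L$ the objects counted as errors are exactly those not in the majority class $j(L)$, so the total number of misclassified objects is $\sum_L (n_L - n_L^{j(L)}) = \sum_L \sum_{i\neq j(L)} n_L^i$, and the classification error is this quantity divided by $n$. The whole point of the refinement is to avoid charging the crude Lemma \ref{lemma8} estimate to all $l$ leaves.

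For the large leaves in $H$ I would reuse the per-leaf estimate already established inside the proof of Lemma \ref{lemma8}: for every leaf $L$ of a tree in $D_{P_\alpha:0}$ and every $i\neq j(L)$ one has $n_L^i \leq k\epsilon n$, hence $\sum_{i\neq j(L)} n_L^i \leq k(k-1)\epsilon n$. Summing this over the $l_\eta$ leaves of $H$ and dividing by $n$ shows that $H$ contributes at most $k(k-1)l_\eta\epsilon$ to the error. This is precisely the Lemma \ref{lemma8} bound, but now applied only to the few largest leaves, which is what replaces the dependence on the total count $l$ by a dependence on $l_\eta$.

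For the remaining leaves in $\bar H$ I would instead use the crude majority bound. Since $j(L)$ is the largest class among $k$ classes, $n_L^{j(L)} \geq n_L/k$, so the number of errors in $L$ is $n_L - n_L^{j(L)} \leq \frac{k-1}{k} n_L$. By the defining property of $l_\eta$, the leaves in $H$ hold more than $(1-\eta)n$ objects, hence $\sum_{L\in\bar H} n_L \leq \eta n$. Therefore the errors from $\bar H$ number at most $\frac{k-1}{k}\sum_{L\in\bar H} n_L \leq \frac{k-1}{k}\eta n$, contributing at most $\frac{k-1}{k}\eta$ to the error. Adding the two contributions yields the claimed bound $k(k-1)l_\eta\epsilon + \frac{k-1}{k}\eta$.

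The only genuine subtlety is deciding which estimate to apply where: the sharp per-class bound $n_L^i \leq k\epsilon n$ is wasteful on the numerous small leaves, since using it everywhere would reintroduce the factor $l$, whereas the majority bound $\frac{k-1}{k}n_L$ is wasteful on large leaves but essentially free on the small ones because they collectively contain at most $\eta n$ objects. I expect no technical obstacle beyond verifying the two elementary facts that drive the two halves, namely that the ``more than $1-\eta$'' threshold forces $\sum_{L\in\bar H} n_L \leq \eta n$, and that the majority class occupies at least a $1/k$ fraction of each leaf.
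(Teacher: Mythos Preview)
Your proposal is correct and follows essentially the same argument as the paper: partition the leaves into the $l_\eta$ largest (the paper calls this set $A$, you call it $H$) and the remainder, apply the per-leaf bound $\tilde n_L \le k(k-1)\epsilon n$ from Lemma~\ref{lemma8} on the former, and the majority bound $\tilde n_L \le \frac{k-1}{k} n_L$ together with $\sum_{L\in\bar H} n_L \le \eta n$ on the latter. The two proofs are effectively identical.
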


Denote $D_{E:\epsilon}$ as the class of trees with classification error less than or equal to $\epsilon$ on the set of input $S$. We can further derive a useful relation between $D_{E:\epsilon}$ and $D_{P_{\epsilon n}:0}$.
\begin{lemma}\label{lemma:embedding}
For any multi-class input set $S$ with $k$ classes, $D_{E:\epsilon}\subseteq D_{P_{\epsilon n}:0}\subseteq D_{E:k(k-1)\epsilon l}$.
\end{lemma}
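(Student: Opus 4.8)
The plan is to prove the two set inclusions separately, treating the right-hand inclusion as an immediate restatement of Lemma \ref{lemma8} and reserving the (short) real work for the left-hand inclusion.

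For the inclusion $D_{E:\epsilon}\subseteq D_{P_{\epsilon n}:0}$, I would start from a tree $T$ whose classification error is at most $\epsilon$. Writing $j_L$ for the class label assigned to leaf $L$, the total number of misclassified objects is $\sum_L \sum_{i\neq j_L} n_L^i \leq \epsilon n$. Since every summand is non-negative, this forces the per-leaf, per-minority-class bound $n_L^i \leq \epsilon n = \alpha$ for every leaf $L$ and every class $i \neq j_L$. The key observation is then that in each pair term of the hinged-Pairs sum \eqref{eq:hingedPairs} over a leaf $L$, the two indices $i\neq i'$ cannot both equal the single label $j_L$, so at least one of them, say $i$, is a minority class with count $n_L^i\le\alpha$; then $[n_L^i-\alpha]_+=0$, the product $[n_L^i-\alpha]_+[n_L^{i'}-\alpha]_+$ vanishes, and the outer hinge $[\,\cdot-\alpha^2]_+$ returns zero. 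Hence $P_\alpha(L)=0$ for every leaf, i.e. $T\in D_{P_{\epsilon n}:0}$.

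For the inclusion $D_{P_{\epsilon n}:0}\subseteq D_{E:k(k-1)\epsilon l}$, I would simply invoke Lemma \ref{lemma8}: any tree with $P_{\epsilon n}(L)=0$ at every one of its $l$ leaves has classification error at most $k(k-1)l\epsilon$, which is precisely the statement that such a tree lies in $D_{E:k(k-1)\epsilon l}$. No new argument is needed beyond rephrasing the established error bound as a membership statement.

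The main obstacle, such as it is, lies entirely in the first inclusion: making airtight the passage from the aggregate bound $\sum_L\sum_{i\neq j_L}n_L^i\leq \epsilon n$ to the individual bound $n_L^i\leq\alpha$, and then verifying the pairwise observation that a single leaf label cannot serve as both indices of any cross term. Once these are in place, the hinge structure of $P_\alpha$ forces every term to zero automatically. A minor point to handle cleanly is ties for the majority class, but since the labeling assigns exactly one class $j_L$ per leaf, every other class is treated as a minority class and the bound $n_L^i\le\alpha$ still holds, so ties cause no difficulty.
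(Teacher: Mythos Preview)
Your proof is correct and follows essentially the same route as the paper: bound each minority-class count in every leaf by $\alpha=\epsilon n$ from the aggregate error bound, conclude that every term of $P_\alpha(L)$ vanishes, and invoke Lemma~\ref{lemma8} for the second inclusion. Your argument for the vanishing is in fact slightly more direct than the paper's, since you use the inner hinge $[n_L^i-\alpha]_+=0$ immediately, whereas the paper passes through the inequality $\tfrac{n_L^i n_L^j}{n_L^i+n_L^j}\leq n_L^i\leq \alpha$ to kill the outer hinge.
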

\begin{proof}
To show $D_{E:\epsilon}\subseteq D_{P_{\epsilon n}}$, for any tree $T \in D_{E:\epsilon}$, we have $\sum_{i=1}^{l} \tilde{n}_{Li}\leq \epsilon n$, where $l$ is the number of leaves and $\tilde{n}_{Li}$ is the number of objects in leaf $L_i$ that are not from the majority class: $\tilde{n}_{L}=n_L-n^{max}_{L}$. This implies $\tilde{n}_{L}\leq \epsilon n$ for all leaves of $T$. Suppose $j$ is the class with most number of objects in leaf $L$: $n_L^j=n_L^{max}$. It is not hard to see for any class $i\neq j$
\begin{equation*}
\frac{n_L^i n_L^j}{n_L^i+n_L^j}\leq n_L^i \leq \epsilon n,
\end{equation*}
which implies $[[n^i_L-\alpha]_+[n^j_L-\alpha]_+-\alpha^2]_+ =0$. Thus we have $F(L)=\sum_{p\neq q}[[n^p_L-\alpha]_+[n^q_L-\alpha]_+-\alpha^2]_+=0$. Thus $D_{E:\epsilon}\subseteq D_{P_{\epsilon n}:0}$.
$D_{P_{\epsilon n}:0}\subseteq D_{E:k(k-1)\epsilon l}$ follows from Lemma \ref{lemma8}.
\end{proof}

The main theorem of this section is the following.
\begin{theorem}
In multi-class classification with $k$ classes, if $T$ is the decision tree returned by {\textsc{GreedyTree}} using hinged-Pairs (setting $\alpha=\epsilon n$) applied on the set $S$ of $n$ objects, then we have the following:
\begin{multline*}
Cost_{P_\alpha}(S)\leq O(\log n) OPT_{P_\alpha:0}(S) \\
\leq O(\log n) OPT_{E:\epsilon}(S).
\end{multline*}
\end{theorem}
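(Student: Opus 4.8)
The plan is to decompose the displayed chain into its two component inequalities and discharge each by invoking machinery already established in the paper: the statement is essentially a synthesis of Theorem \ref{thm:logn}, Lemma \ref{lemma:F_admissible_multi}, and Lemma \ref{lemma:embedding}, so I expect a short corollary-style argument rather than a fresh one.

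For the first inequality, $Cost_{P_\alpha}(S) \leq O(\log n)\, OPT_{P_\alpha:0}(S)$, I would first observe that $D_{P_\alpha:0}$ coincides with the family $D_F$ of Theorem \ref{thm:logn} for $F = P_\alpha$: since $P_\alpha$ is non-negative, the early-stopping constraint $P_\alpha(L) \leq 0$ at every leaf is identical to $P_\alpha(L) = 0$, so $OPT_{P_\alpha:0}(S)$ is exactly the quantity $OPT(S)$ in that theorem. By Lemma \ref{lemma:F_admissible_multi} the hinged-Pairs function $P_\alpha$ is \emph{admissible}, hence it meets all hypotheses of Theorem \ref{thm:logn} — non-negativity, monotonicity, supermodularity, and $\log(P_\alpha(S)) = O(\log n)$. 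Theorem \ref{thm:logn} then yields $Cost_{P_\alpha}(S) \leq (\log(P_\alpha(S)) + 1)\, OPT_{P_\alpha:0}(S)$, and admissibility property (5) converts the prefactor into $O(\log n)$, giving the first inequality directly.

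For the second inequality it suffices to show $OPT_{P_\alpha:0}(S) \leq OPT_{E:\epsilon}(S)$, after which multiplying both sides by the $O(\log n)$ factor closes the chain. Here I would invoke the inclusion $D_{E:\epsilon} \subseteq D_{P_{\epsilon n}:0}$ from Lemma \ref{lemma:embedding}; since $\alpha = \epsilon n$ this reads $D_{E:\epsilon} \subseteq D_{P_\alpha:0}$. Both optima are defined as the minimum max-cost over the respective feasible family, and a minimum taken over a larger feasible set is no larger than one taken over a subset. As every tree of classification error at most $\epsilon$ is a feasible candidate for the $OPT_{P_\alpha:0}$ minimization, we conclude $OPT_{P_\alpha:0}(S) \leq OPT_{E:\epsilon}(S)$.

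No genuinely new obstacle arises; the only points requiring care are bookkeeping. First, I must confirm that the $\delta = 0$ early-stopping family $D_{P_\alpha:0}$ matches the error-free family of Theorem \ref{thm:logn}, so that the approximation result applies verbatim to $P_\alpha$. Second, and more importantly, I must get the monotonicity of $OPT$ in the feasible set the right way around: a tighter feasibility requirement (bounded classification error) can only \emph{raise} the optimal cost relative to the looser impurity-based requirement, which is precisely why the embedding of Lemma \ref{lemma:embedding} points in the useful direction and lets the two inequalities compose into the stated chain.
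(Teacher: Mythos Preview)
Your proposal is correct and follows exactly the paper's own proof, which simply cites Theorem \ref{thm:logn} for the first inequality and Lemma \ref{lemma:embedding} for the second. Your write-up is more explicit (spelling out admissibility via Lemma \ref{lemma:F_admissible_multi}, the $\delta=0$ identification, and the monotonicity of the optimum under set inclusion), but the underlying argument is identical.
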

\begin{proof}
The first inequality follows from Theorem \ref{thm:logn} and the second inequality follows from Lemma \ref{lemma:embedding}.
\end{proof}
The above theorem states that for a given error parameter $\epsilon$, a greedy tree can be constructed using hinged-Pairs $P_{\alpha}$ by setting $\alpha=\epsilon n$, with the max-cost guaranteed to be within an $O(\log n)$ factor of the best possible max-cost among all decision trees that have classification error less than or equal to $\epsilon$. To our knowledge this is the first bound relating classification error to cost, which provides a theoretical basis for accuracy-cost trade-off.

\section{Experimental Results}\label{sec:experiments}

\begin{figure}
\centering
\includegraphics[trim=1.7cm 6cm 1.7cm 4.6cm,clip,width=0.45\textwidth]{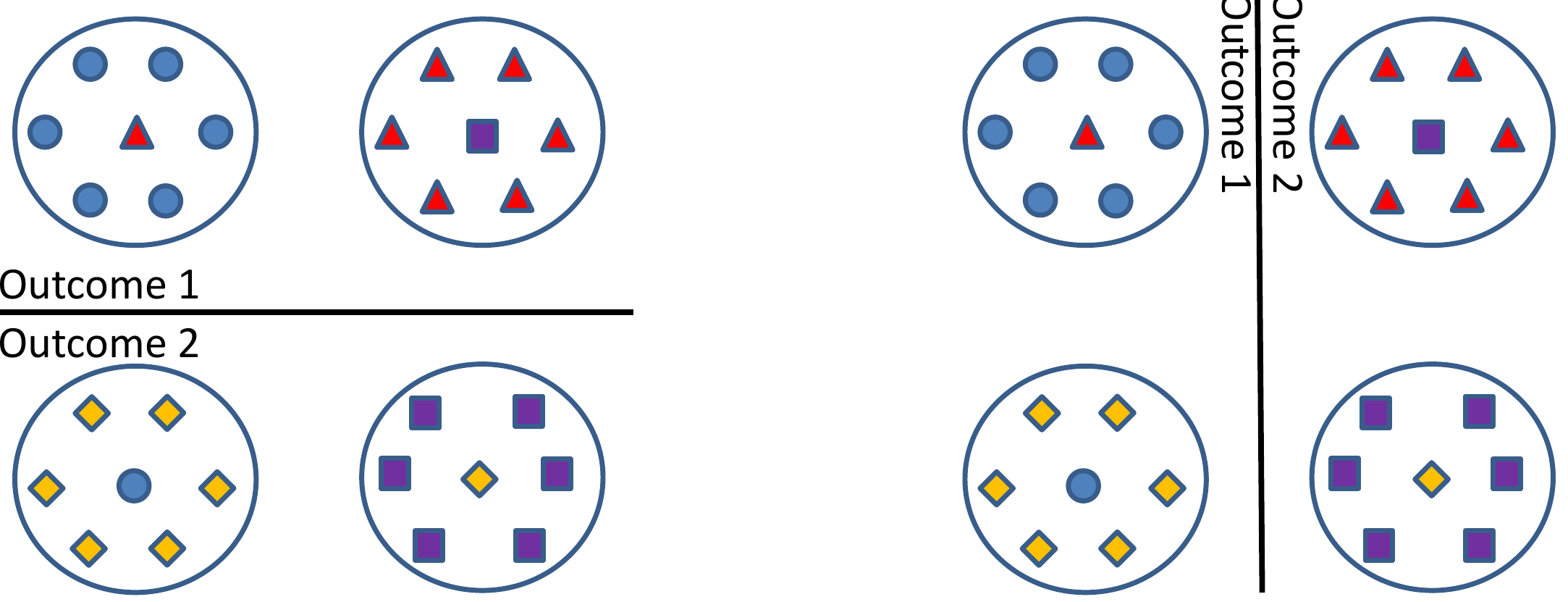}
\caption{A synthetic example to show the effect of outliers affecting max-cost. The left and right figures above show the test outcomes of test $t_1$ and $t_2$, respectively. } \label{fig:synth}
\end{figure}

\begin{figure}
\centering
\includegraphics[trim=5.75cm 4cm 6cm 4cm,clip,width=0.45\textwidth]{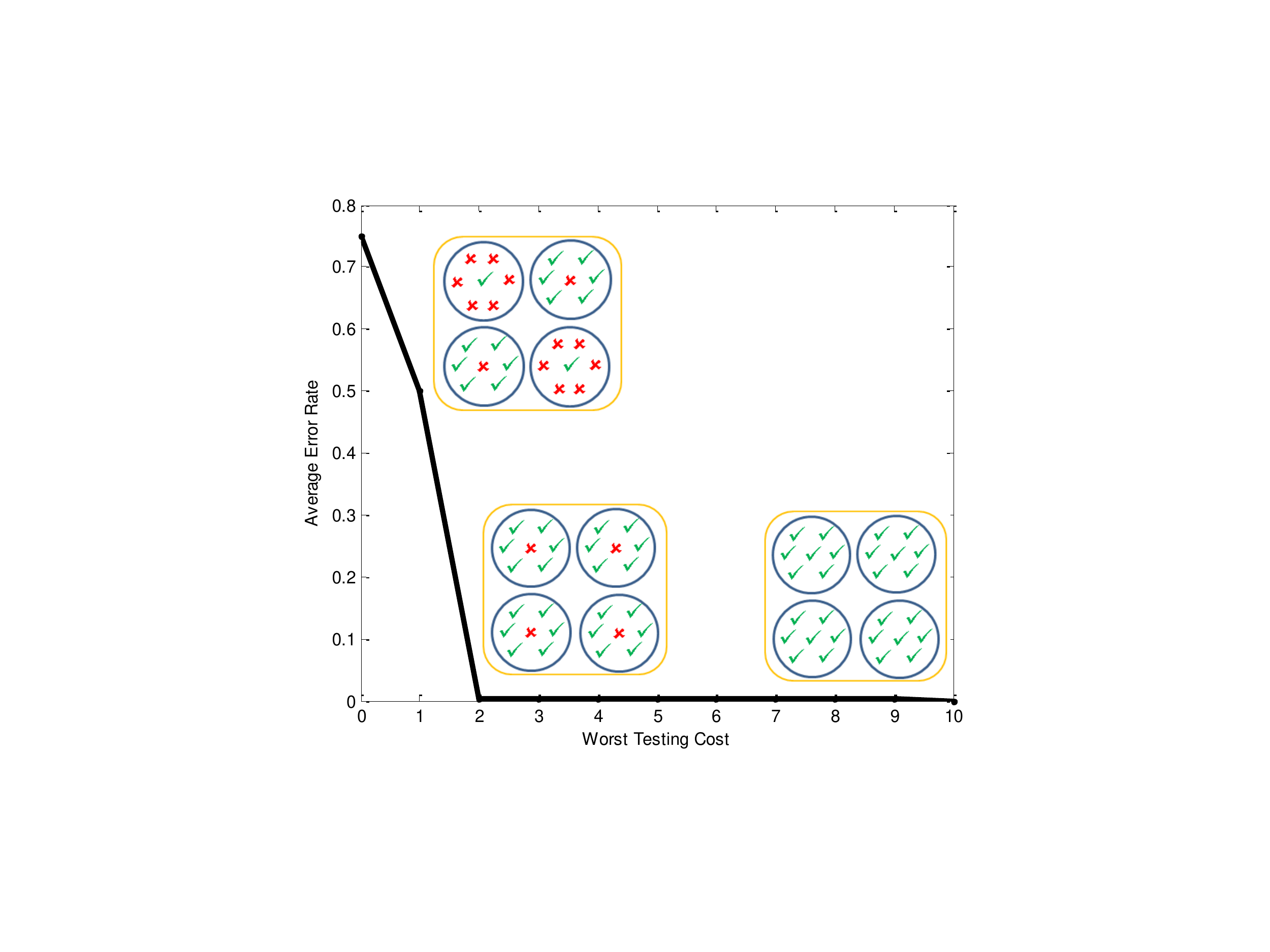}
\vspace{-.35cm}
\caption{The error-cost trade-off plot of the Algorithm \ref*{algo:GreedyTree} using Pairs on the synthetic example. $0.39\%$ error can be achieved using only a depth-2 tree but it takes a depth-10 tree to achieve zero error. } \label{fig:synthplot}
\end{figure}

\begin{figure*}[htb!]
\centering
\subfigure[House Votes]{\includegraphics[trim=40mm 88mm 40mm 90mm,clip,height=.23\linewidth]{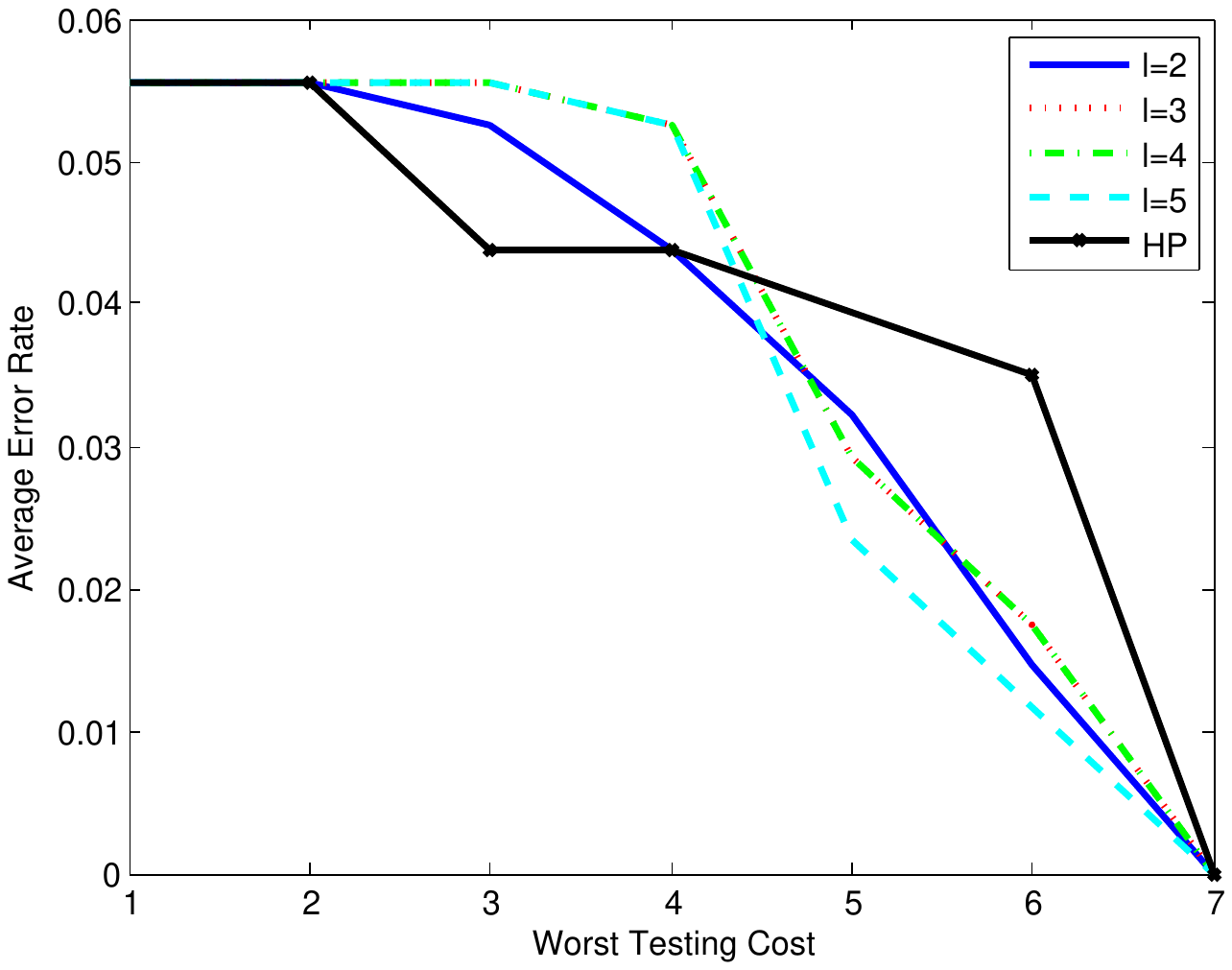}}
\subfigure[Sonar]{\includegraphics[trim=40mm 88mm 40mm 90mm,clip,height=.23\linewidth]{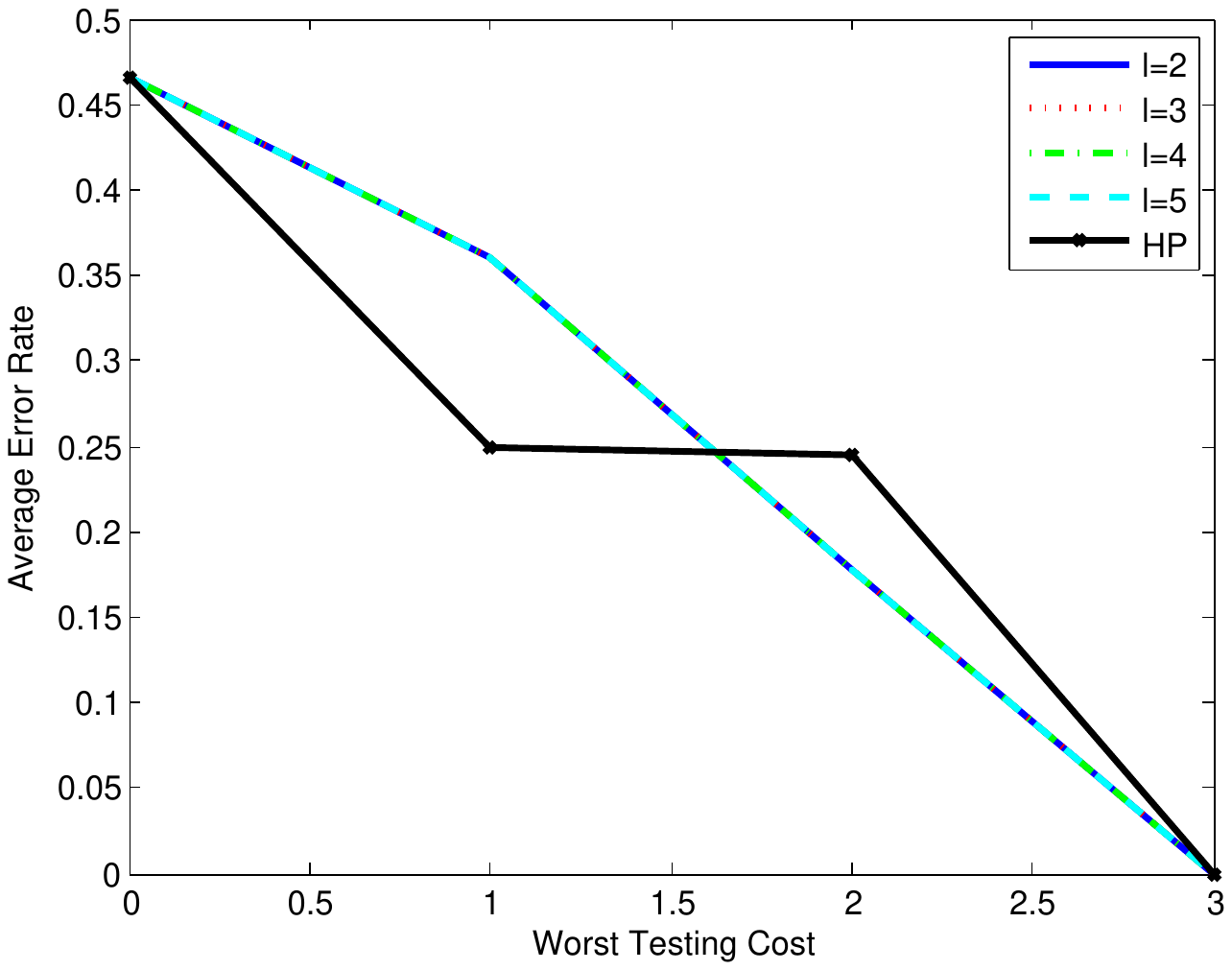}}
\subfigure[Ionosphere]{\includegraphics[trim=40mm 88mm 40mm 90mm,clip,height=.23\linewidth]{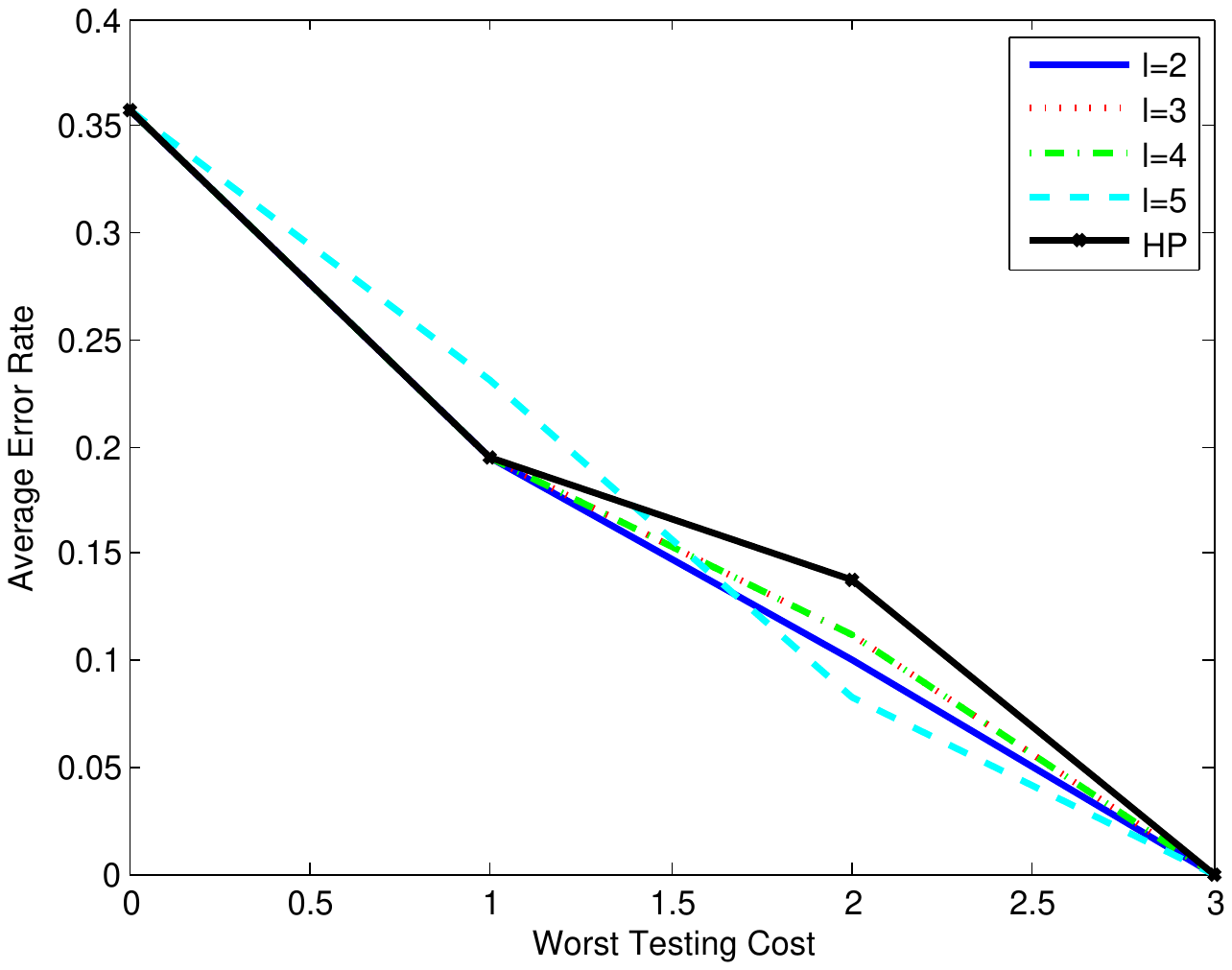}}\\[-2ex]
\subfigure[Statlog DNA]{\includegraphics[trim=40mm 88mm 40mm 90mm,clip,height=.23\linewidth]{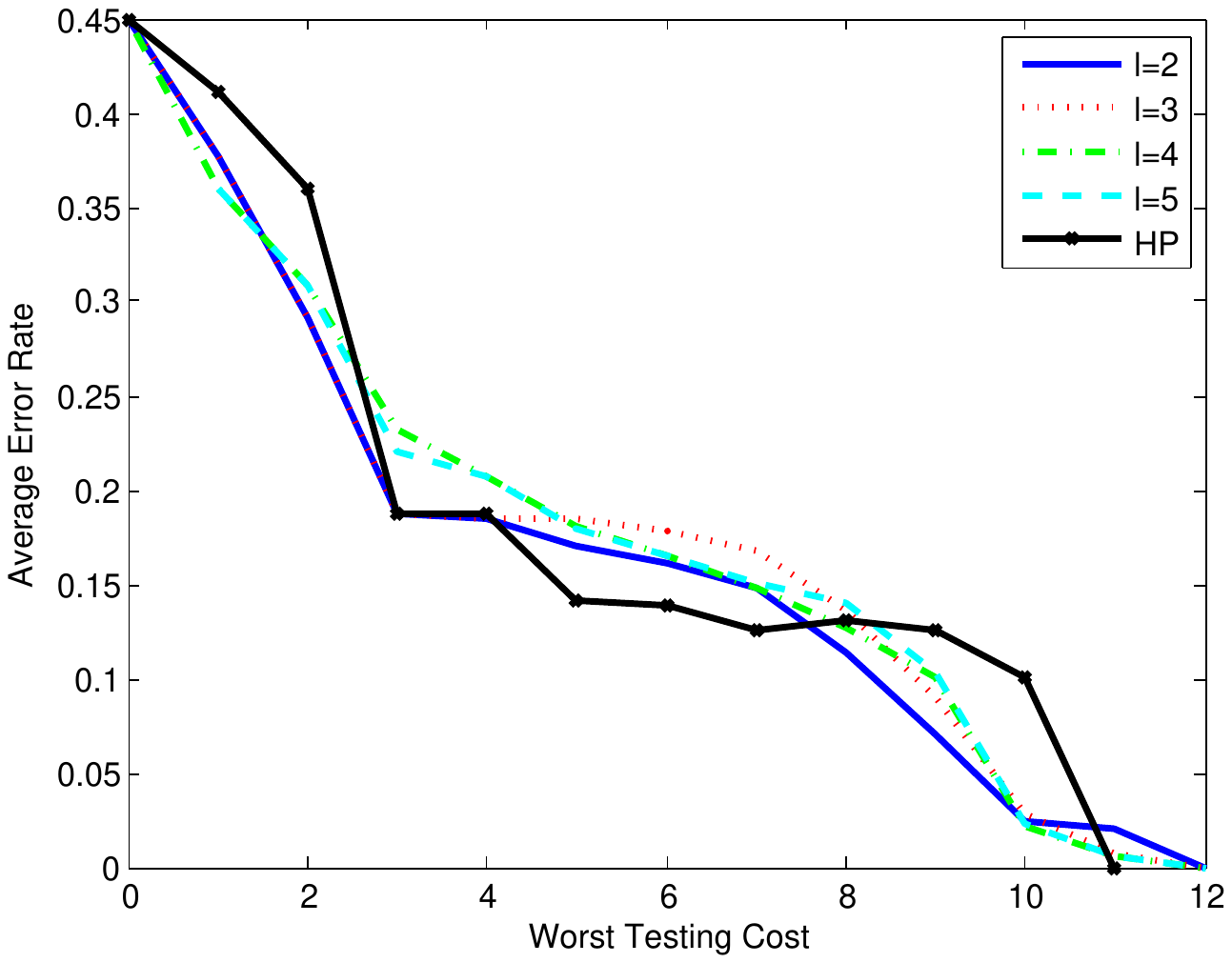}}
\subfigure[Boston Housing]{\includegraphics[trim=40mm 88mm 40mm 90mm,clip,height=.23\linewidth]{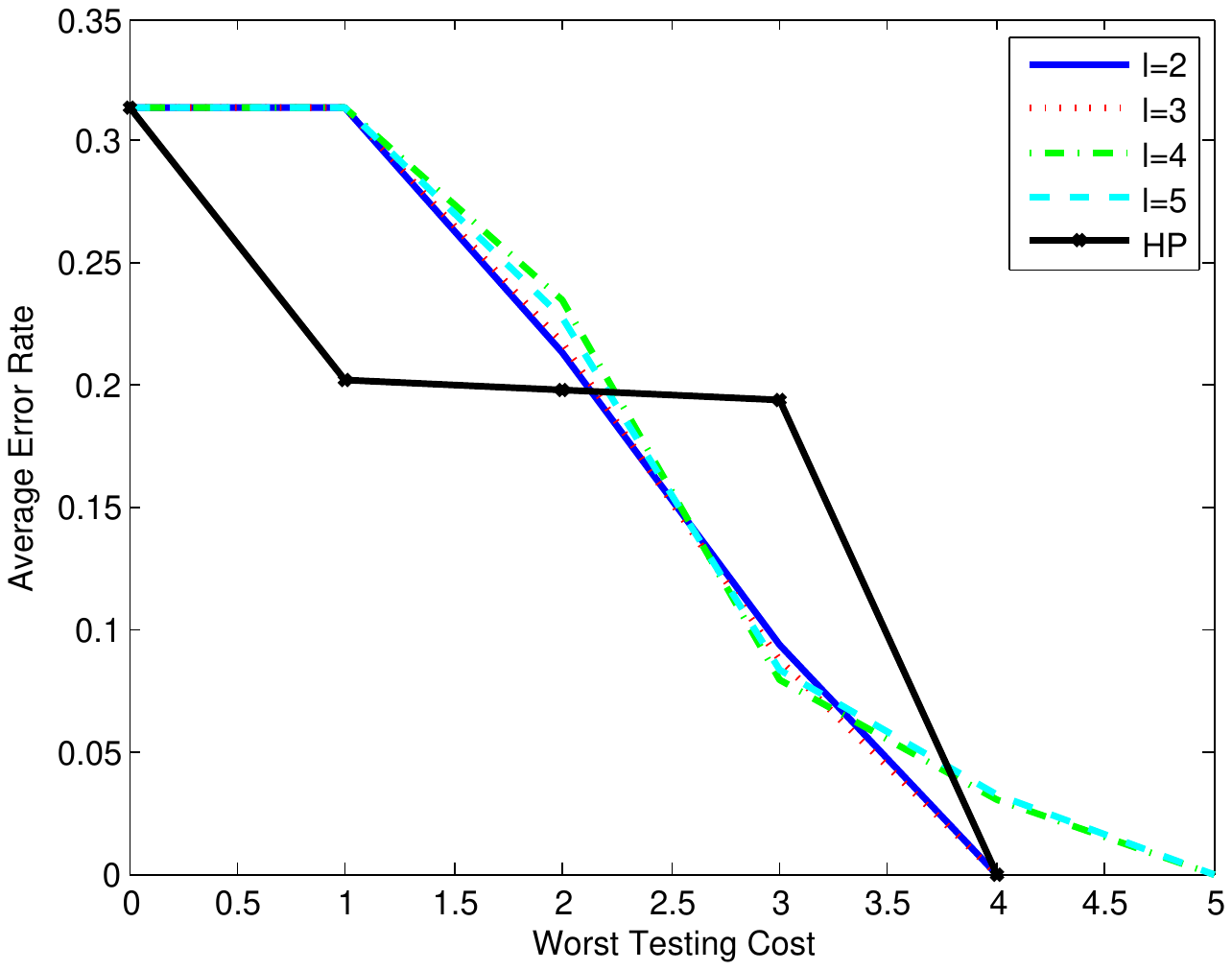}}
\subfigure[Soybean]{\includegraphics[trim=40mm 88mm 40mm 90mm,clip,height=.23\linewidth]{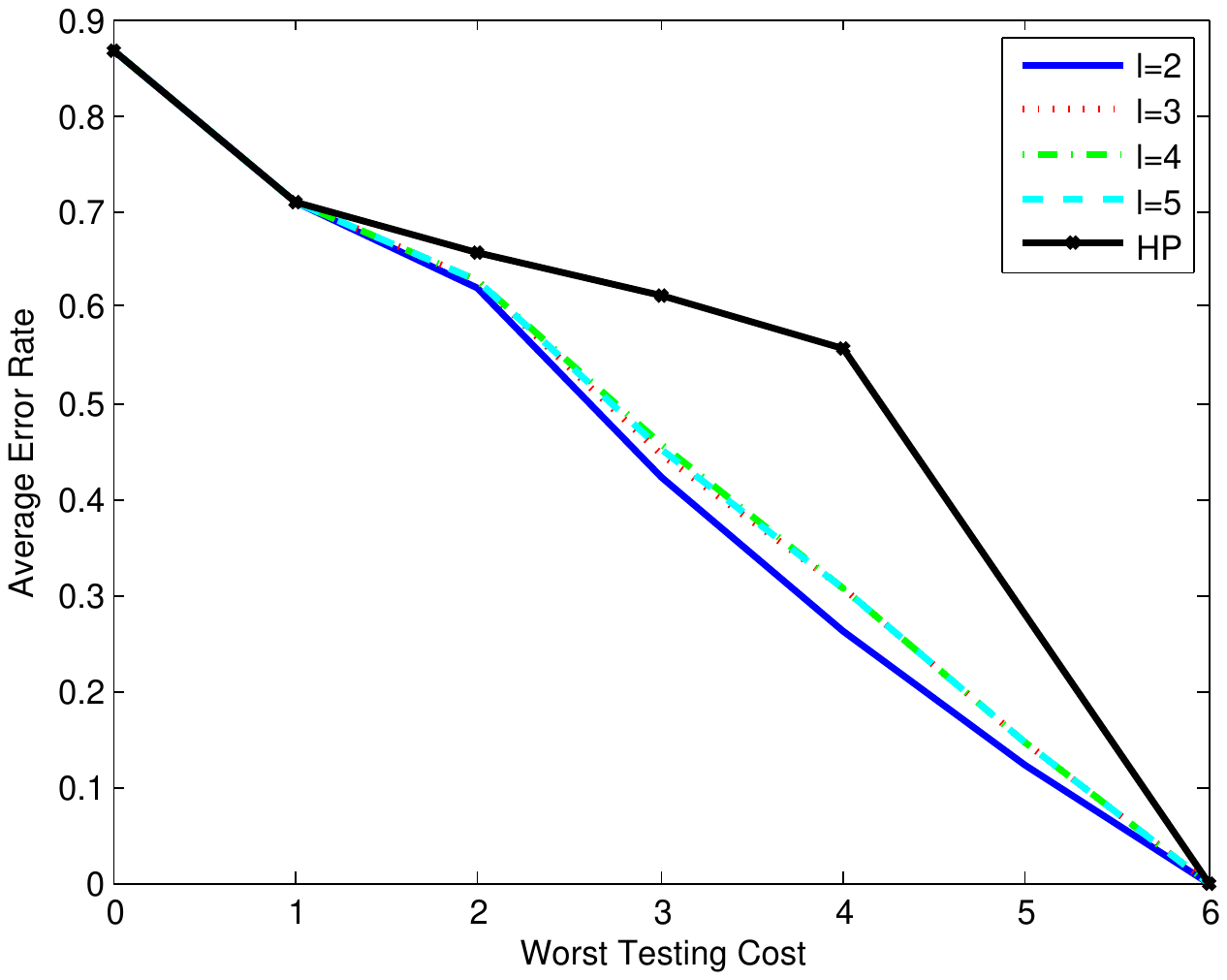}}\\[-2ex]
\subfigure[Pima]{\includegraphics[trim=40mm 88mm 40mm 90mm,clip,height=.23\linewidth]{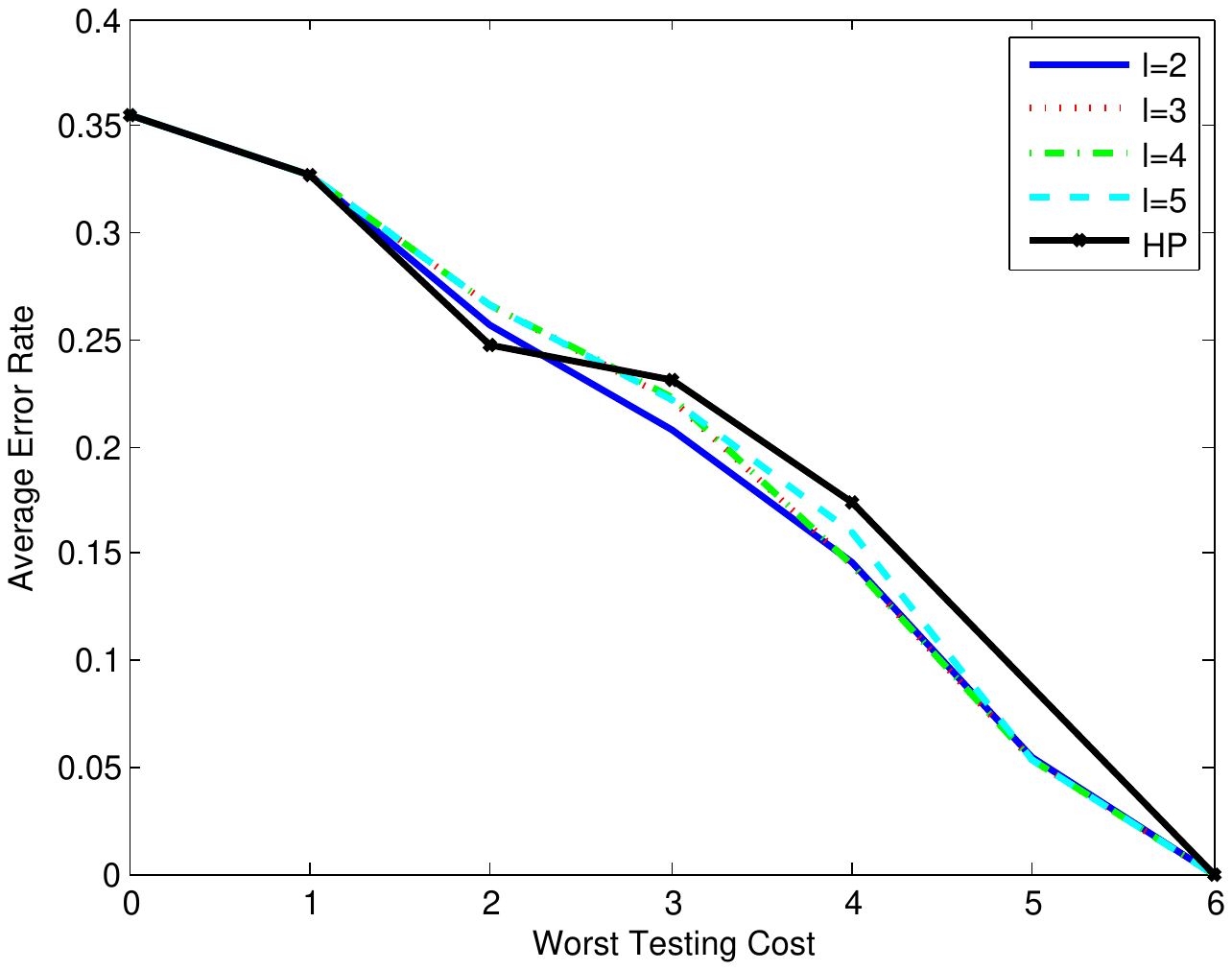}}
\subfigure[WBCD]{\includegraphics[trim=40mm 88mm 40mm 90mm,clip,height=.23\linewidth]{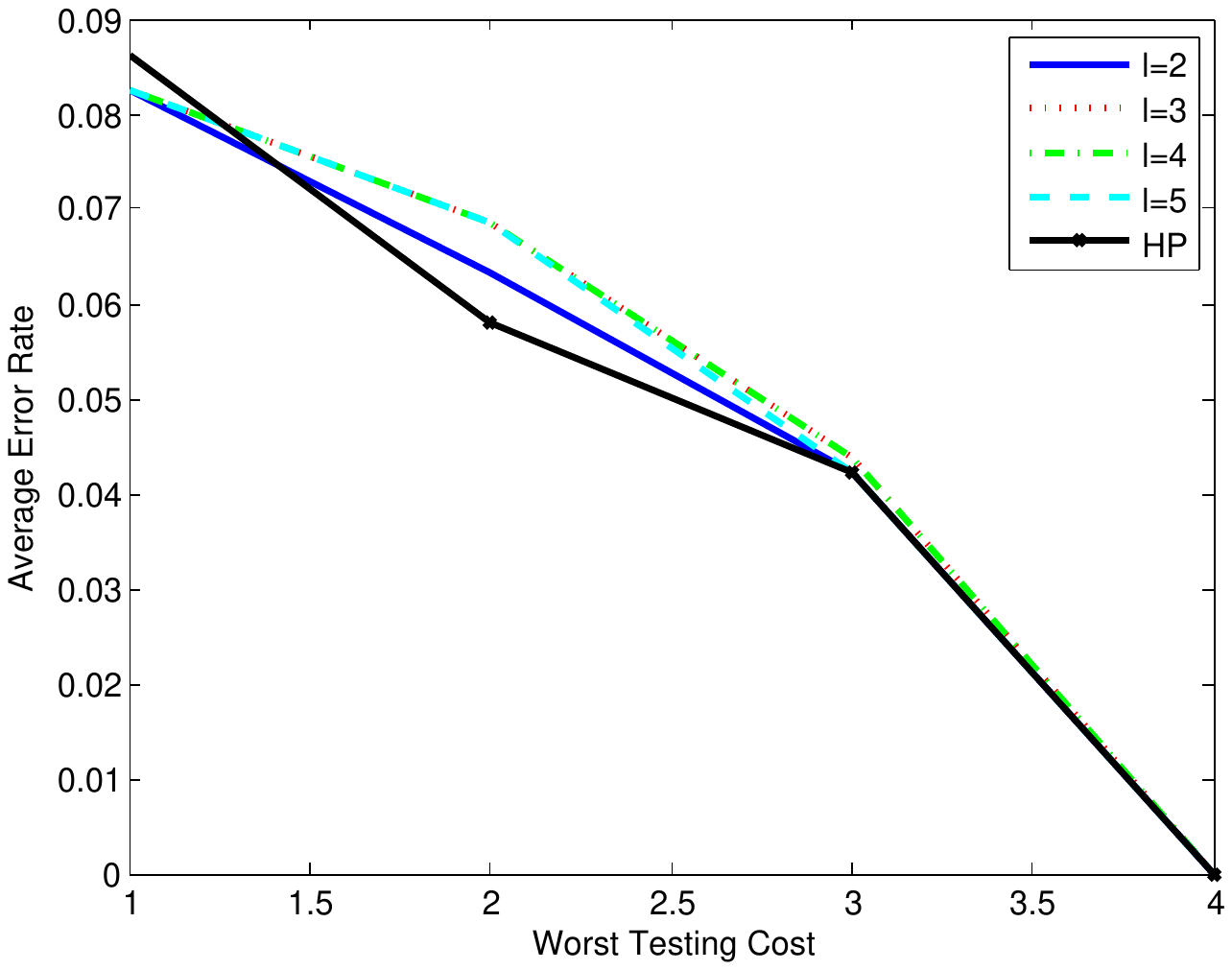}}
\subfigure[Mammography]{\includegraphics[trim=40mm 88mm 40mm 90mm,clip,height=.23\linewidth]{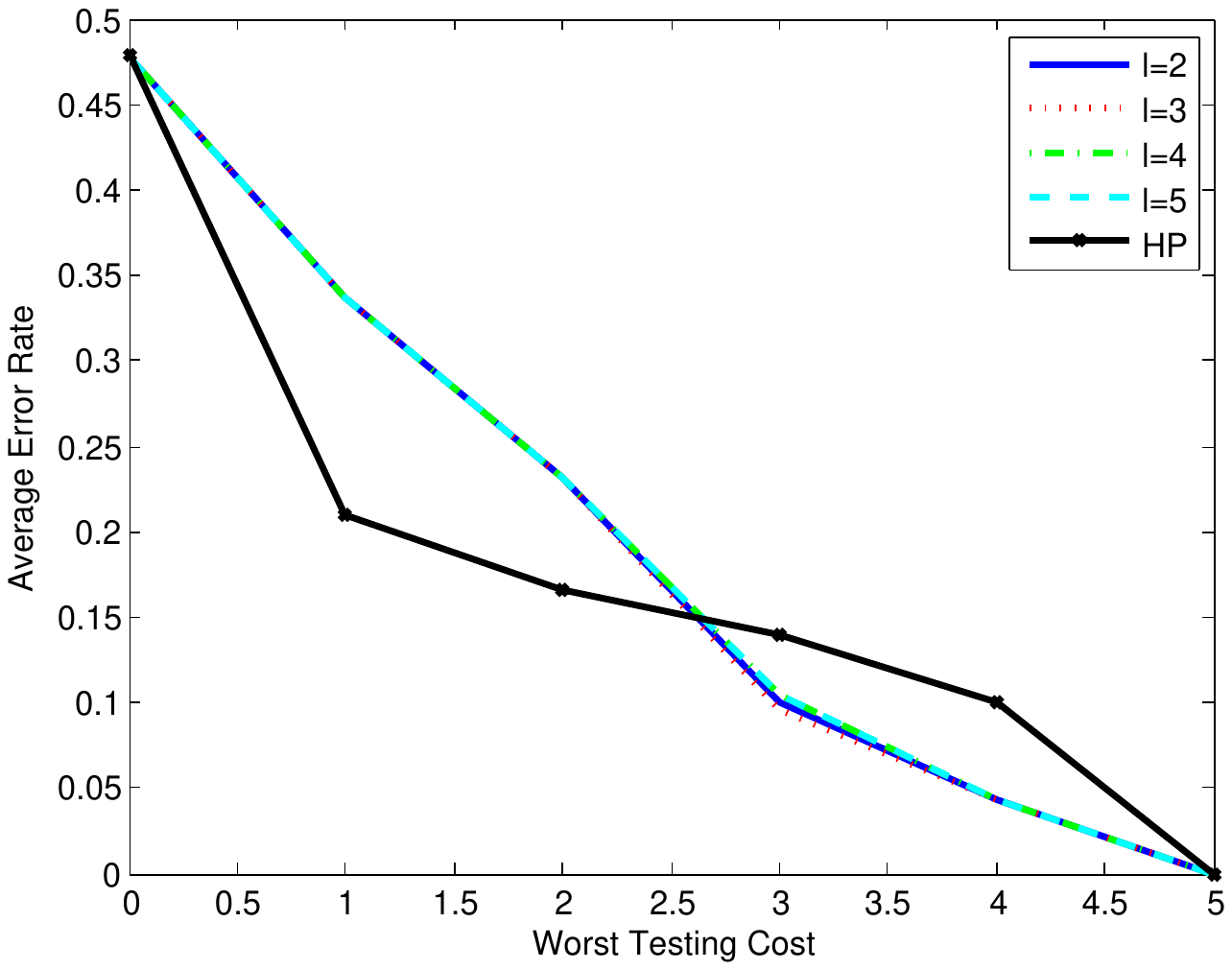}}
\vspace{-.35cm}
\caption{Comparison of classification error vs. max-cost for the Powers impurity function in \eqref{eq:powerFunc} for $l=2,3,4,5$ and the hinged-Pairs impurity function in \eqref{eq:hingedPairs}. Note that for both House Votes and WBCD, the depth $0$ tree is not included as the error decreases dramatically using a single test. In many cases, the hinged pairs impurity function outperforms the Powers impurity functions for trees with smaller max-costs, whereas the Powers impurity function outperforms the hinged-Pairs function for larger max-costs.}
\label{fig:real_world_tradeoff_curves}
\end{figure*}

We first demonstrate the effect of outliers using a simple synthetic example, where a small set of outliers dramatically increases the max-cost of the tree. We show that allowing a small number of errors in the tree drastically reduces the cost of the tree, allowing for efficient trees to be constructed in the presence of outliers. Next, we demonstrate the ability to construct decision trees on real world data sets. We observe a similar behavior to the synthetic data set on many of these data sets, where allowing a small amount of error results in trees with significantly lower cost. Additionally, we see the effect of impurity function choice on performance of the trees. For all real datasets, we present performance of the Powers impurity function presented in Eq. \eqref{eq:powerFunc} with $l=2,3,4,5$ and error introduced by early stopping as well as the hinged-Pairs impurity function presented in Eq. \eqref{eq:hingedPairs} with error introduced by varying the parameter $\alpha$.




\textbf{Synthetic Example:}
Here we consider a multi-class classification example to demonstrate the effect a small set of objects can have on the max-cost of the tree. Consider a data set composed of 1024 objects belonging to 4 classes with 10 binary tests available. Assume that the set of tests is complete, that is no two objects have the same set of test outcomes. Note that by fixing the order of the tests, the set of test outcomes maps each object to an integer in the range $[0,1023]$. From this mapping, we give the objects in the ranges $[1,255]$ , $[257,511]$ , $[513,767]$, and  $[769,1023]$ the labels $1$, $2$, $3$, and $4$, respectively, and the objects $0$, $256$, $512$, and $768$ the labels $2$, $3$, $4$, and $1$, respectively (Figure \ref{fig:synth} shows the data projected to the first two tests). Suppose each test carries a unit cost. By Kraft's Inequality~\cite{cover}, the optimal max-cost in order to correctly classify every object is 10, however, using only $t_1$ and $t_2$ as selected by the greedy algorithm, leads to a correct classification of all but 4 objects, as shown in Figure \ref{fig:synthplot}. For this type of data set, a constant sized set of costs can change from a tree with a constant max-cost to a tree with a $\log n$ max-cost.

\textbf{Data Sets:} We compare performance using 9 data sets from the UCI Repository \cite{UCI_repository}. We assume that all tests (features) have a uniform cost. For each data set, we replace non-unique objects with a single instance using the most common label for the objects, allowing every data set to be complete (perfectly classified by the decision trees). Additionally, continuous features are transformed to discrete features by quantizing to 10 uniformly spaced levels.
More details on the data sets used can be found in the Appendix.



\textbf{Error vs. Cost Trade-Off:} Fig. \ref{fig:real_world_tradeoff_curves} shows the trade-off between classification error and max-cost, which suggest two key trends. First, it appears that many data sets, such as house votes, Statlog DNA, Wisconsin breast cancer, and mammography, can be classified with minimal error using few tests. Intuitively, this small error appears to correspond to a small subset of outlier objects which require a large number of tests to correctly classify while the majority of the data can be classified with a small number of tests. Second, empirical evidence suggests that the optimal choice of impurity function is dependent on the desired max-cost of the tree. For trees with a smaller budget (and therefore lower depth), the hinged-Pairs impurity function outperforms the Powers impurity function with early stopping, whereas for larger budget (and greater depth), the Powers impurity function outperforms hinged-Pairs. This matches our intuitive understanding of the impurity functions, as the Powers impurity function biases towards tests which evenly divide the data whereas hinged-Pairs puts more emphasis on classification performance. 

\section{Conclusion}
We characterize a broad class of admissible impurity functions that can be used in a greedy algorithm to yield $O(\log n)$ guarantees of the optimal max-cost. We give examples of such admissible functions and demonstrate that they have different empirical properties even though they all enjoy the $O(\log n)$ guarantee. We further design admissible functions to allow for accuracy-cost trade-off and provide a bound relating classification error to cost. Finally, through real world datasets we demonstrate that our algorithm can indeed censor the outliers and achieve high classification accuracy using low max-cost. To visualize such outliers we construct a 2-D synthetic experiment and show our algorithm successfully identifies these as outliers.

\bibliographystyle{apalike}
\bibliography{OptimalTreeAISTATS}
\onecolumn
\section*{Appendix}
\paragraph{Proof of Lemma 3.4}
Before showing admissibility of the hinged-Pairs function in the multiclass setting, we first show  $P_\alpha(G)$ is \emph{admissible} for the binary setting.
\begin{lemma}\label{lemma:F_admissible_binary}
Consider the binary classification setting, let
\begin{equation*}
P_\alpha(G)=[[n^1_G-\alpha]_+[n^2_G-\alpha]_+-\alpha^2]_+,
\end{equation*}
where $[x]_+=\max(x,0)$. $P_\alpha(G)$ is \emph{admissible}.
\end{lemma}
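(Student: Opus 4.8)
The plan is to verify the five defining properties of an \emph{admissible} function directly, viewing $P_\alpha$ as a function of the two class counts $a=n^1_G$ and $b=n^2_G$, i.e.\ $P_\alpha(G)=\psi(h(a,b))$ where $h(a,b)=[a-\alpha]_+[b-\alpha]_+$ is the ``inner'' bilinear hinge and $\psi(t)=[t-\alpha^2]_+$ is the ``outer'' hinge (here $\alpha\ge 0$). Properties (1), (2), (3), (5) are routine. Non-negativity is immediate from the outer $[\cdot]_+$. Purity holds because a pure set has $b=0\le\alpha$, so $h=0$ and $\psi(0)=[-\alpha^2]_+=0$. Monotonicity follows since adding an object only increases $a$ or $b$, and both $[a-\alpha]_+$ and $[b-\alpha]_+$ are non-negative and non-decreasing, so $h$ is non-decreasing and $\psi$ is non-decreasing. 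Finally $P_\alpha(S)=\psi(h)\le h\le ab\le n^2$ gives $\log P_\alpha(S)=O(\log n)$.

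The crux, and the main obstacle, is supermodularity (property (4)). Since there are only two classes, it suffices to check the defining inequality when the added object is a fresh object of class $1$ (the class-$2$ case is symmetric), i.e.\ when it increments $a$ by one in both sets. Writing $\Delta_1 g(a,b)=g(a+1,b)-g(a,b)$ for any function $g$ of the counts, and noting $R\subseteq G$ forces $a_R\le a_G$ and $b_R\le b_G$, supermodularity reduces to showing that $\Delta_1 P_\alpha(a,b)$ is non-decreasing in \emph{both} $a$ and $b$. I would first establish the corresponding structure for the inner function $h$: for fixed $b$, $h(\cdot,b)$ is a non-negative multiple of the convex map $[a-\alpha]_+$, hence convex in $a$; and $\Delta_1 h(a,b)=\big([a+1-\alpha]_+-[a-\alpha]_+\big)[b-\alpha]_+$ is a product of a factor non-decreasing in $a$ and a factor non-decreasing in $b$, so $h$ is monotone, coordinatewise convex, and lattice-supermodular.

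It then remains to show that composing with the outer hinge $\psi$ preserves these properties; this is the delicate point, because applying $[\cdot]_+$ to a supermodular function does not preserve supermodularity in general. The key fact I would invoke is that if $g$ is monotone, coordinatewise convex, and supermodular and $\psi$ is convex and non-decreasing, then $\psi\circ g$ is again supermodular. Concretely, to see $\Delta_1 P_\alpha$ is non-decreasing in $b$, fix $b'\ge b$ and set $p=h(a,b)$, $q=h(a+1,b)$, $r=h(a,b')$, $s=h(a+1,b')$; monotonicity of $h$ gives $r\ge p$, supermodularity gives $s-r\ge q-p\ge 0$, and then convexity and monotonicity of $\psi$ yield $\psi(s)-\psi(r)\ge\psi(r+(q-p))-\psi(r)\ge\psi(p+(q-p))-\psi(p)=\psi(q)-\psi(p)$, which is exactly $\Delta_1 P_\alpha(a,b')\ge\Delta_1 P_\alpha(a,b)$. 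That $\Delta_1 P_\alpha$ is non-decreasing in $a$ is simply convexity of $P_\alpha(\cdot,b)=\psi(h(\cdot,b))$, which follows from convexity of $h(\cdot,b)$ and $\psi$ being convex non-decreasing.

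Chaining the two monotonicities gives $\Delta_1 P_\alpha(a_G,b_G)\ge\Delta_1 P_\alpha(a_R,b_G)\ge\Delta_1 P_\alpha(a_R,b_R)$, the required supermodularity inequality. The only care needed is to verify the elementary monotonicity of the difference $t\mapsto\psi(t+d)-\psi(t)$ (non-decreasing in $t$ for fixed $d\ge 0$ by convexity, and non-decreasing in $d$ for fixed $t$), which is where all the remaining work concentrates.
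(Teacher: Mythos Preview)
Your argument is correct, and it takes a genuinely different route from the paper. The paper establishes supermodularity by a direct four-case analysis on whether $P_\alpha(R)$, $P_\alpha(R\cup j)$, and $P_\alpha(G)$ vanish, computing the increments explicitly in each case and comparing them (e.g.\ when $P_\alpha(G)>0$ the left side collapses to $n_G^2-\alpha$, which dominates the right side). Your approach instead factors $P_\alpha=\psi\circ h$ and proves a clean closure lemma: if $h$ is monotone, coordinatewise convex, and lattice-supermodular, and $\psi$ is convex non-decreasing, then $\psi\circ h$ inherits supermodularity. The paper's case analysis is shorter for this specific function but somewhat ad hoc; your compositional argument is more conceptual, explains \emph{why} the outer hinge does not destroy supermodularity (the point you correctly flag as delicate, since $[\cdot]_+$ of a supermodular function is not supermodular in general), and would immediately extend to other outer transforms $\psi$ or to powers of the hinge factors. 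Both proofs ultimately reduce to the same elementary fact that $t\mapsto\psi(t+d)-\psi(t)$ is non-decreasing for convex $\psi$ and $d\ge 0$, but you isolate it as the engine of the argument whereas the paper verifies its consequences case by case.
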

\begin{proof}
All the properties are obviously true except supermodularity. To show supermodularity, suppose $R\subseteq G$ and object $j\notin R$. Suppose $j$ belongs to the first class. We need to show
\begin{equation}
P_\alpha(G\cup j)-P_\alpha(G)\geq P_\alpha(R\cup j) -P_\alpha(R). \label{eq:lemma5_1}
\end{equation}
Consider 3 cases:\\
(1) $P_\alpha(R)=P_\alpha(R\cup j)=0$: The right hand side of \eqref{eq:lemma5_1} is 0 and \eqref{eq:lemma5_1} holds because of monotonicity of $P_\alpha$.\\
(2) $P_\alpha(R)=0,P_\alpha(R\cup j)>0,P_\alpha(G)=0$: \eqref{eq:lemma5_1} reduces to $P_\alpha(G\cup j)\geq P_\alpha(R\cup j)$, which is true by monotonicity. \\
(3) $P_\alpha(R)=0,P_\alpha(R\cup j)>0,P_\alpha(G)>0$: Note that $P_\alpha(G)>0$ implies that $[n^1_G-\alpha]_+[n^2_G-\alpha]_+-\alpha^2>0$ which further implies $n_G^1>\alpha, n_G^2>\alpha$. Thus the left hand side is
\begin{equation*}
P_\alpha(G\cup j)-P_\alpha(G)=(n_G^1-\alpha+1)(n_G^2-\alpha)-\alpha^2-((n_G^1-\alpha)(n_G^2-\alpha)-\alpha^2)=n_G^2-\alpha.
\end{equation*}
The right hand side is
\begin{equation*}
P_\alpha(R\cup j)=(n^1_R-\alpha+1)(n^2_R-\alpha)-\alpha^2=(n^1_R-\alpha)(n^2_R-\alpha)-\alpha^2+(n^2_R-\alpha).
\end{equation*}
If $n_R^1\geq \alpha$, $ P_\alpha(R)=\max((n^1_R-\alpha)(n^2_R-\alpha)-\alpha^2,0)=0$ because $P_\alpha(R\cup j)>0$ implies $n_R^2>\alpha$. So $P_\alpha(R\cup j)\leq n^2_R-\alpha \leq n_G^2-\alpha = P_\alpha(G\cup j)-P_\alpha(G)$.\\
(4) $P_\alpha(R)>0$: We have
\begin{equation*}
P_\alpha(G\cup j)-P_\alpha(G)  = n_G^2-\alpha \geq n_R^2 - \alpha  = P_\alpha(R\cup j) -P_\alpha(R).
\end{equation*}
This completes the proof.
\end{proof}
Now we are ready to generalize from the binary hinged-Pairs function to the multiclass hinged-Pairs function.
Again, all properties are obviously except supermodularity. The supermodularity follows from the fact that each term in the sum is supermodular according to Lemma \ref{lemma:F_admissible_binary}.

\paragraph{Proof of Lemma 4.4}
We begin by considering any leaf $L$ of $T$, suppose $j$ is the largest class in $L$.
For $i\neq j$, if $n_L^{i}>\alpha$, we have
\begin{align*}
&[[n^i_L-\alpha]_+[n^j_L-\alpha]_+-\alpha^2]_+ \\
=&\max(n_L^{i}n_L^{j}-\alpha (n_L^{i}+n_L^{j}),0)=0
\end{align*}
, which implies $n_L^{i}n_L^{j}\leq \alpha n_L$. So
\begin{equation*}
n_L^{i}\leq \frac{kn_L^{i}n_L^{j}}{n_L}\leq k\alpha = k\epsilon n.
\end{equation*}
If $n_L^{i}\leq \alpha$, we have $n_L^{i}\leq \epsilon n\leq k \epsilon n$. Let $\tilde{n}_{L}$ be the number of objects in leaf $L$ that are not from the majority class: $\tilde{n}_{L}=n_L-n^{j}_{L}$.
So for any leaf $L$ we have $\frac{\tilde{n}_L}{n}=\frac{\sum_{i\neq j}n_L^{i}}{n}\leq k(k-1) \epsilon$.

Now we enumerate the leaves of $T$ in non-increasing order according to the number of objects they contain. Let $A$ be the set of the first $l_\eta$ leaves. By definition of $l_\eta$, the total number of objects contained in $A$ is ${n_A}\geq (1-\eta)n$.

The overall error bound is obtained by considering leaves in $A$ and the complement $\bar{A}$ separately:
\begin{align*}
\frac{\sum_{L\in A} \tilde{n}_L + \sum_{L\in \bar{A}} \tilde{n}_L}{n}& \leq \frac{k(k-1)\epsilon l_\eta n+ \frac{k-1}{k}\eta n}{n} \\
&= k(k-1)l_\eta \epsilon + \frac{k-1}{k}\eta,
\end{align*}
where we have used the fact that $\tilde{n}_L \leq \frac{k-1}{k} n_L$ and that $\sum_{L\in \bar{A}}n_L \leq \eta n$.

\paragraph{Details of Computation in Figure 1}
If Pairs is used, we can compute impurity of each set of interest: $P(G)=30\times 30=900, P(G_{t_1}^1)=30\times 10=300, P(G_{t_1}^2)=0, P(G_{t_2}^1)=P(G_{t_2}^2)=15\times 15=225$; according to Algorithm \ref{algo:GreedyTree}, we can compute $R(t_1)=\max \{\frac{1}{900-300},\frac{1}{900-0}\}=\frac{1}{600}, R(t_2)=\max \{\frac{1}{900-225},\frac{1}{900-225}=\frac{1}{675}\}$ so $t_2$ will be chosen. On the other hand, the impurities for the hinged-Pairs with $\alpha=8$ are $P_\alpha(G)=22\times 22=484, P_\alpha(G_{t_1}^1)=22\times 2=44, P_\alpha(G_{t_1}^2)=0, P_\alpha(G_{t_2}^1)=P_\alpha(G_{t_2}^2)=7\times 7=49$; again we can compute $R(t_1)=\max \{\frac{1}{484-44},\frac{1}{484-0}\}=\frac{1}{440}, R(t_2)=\max \{\frac{1}{484-49},\frac{1}{484-49}=\frac{1}{435}\}$ so $t_1$ will be chosen. The above example shows that Pairs has a stronger preference to balanced tests and may in some cases lead to poor classification result.

\paragraph{Details of Data Sets}
 The house votes data set is composed of the voting records for 435 members of the U.S. House of Representatives (342 unique voting records) on 16 measures, with a goal of identifying the party of each member. The sonar data set contains 208 sonar signatures, each composed of energy levels (quantized to 10 levels) in 60 different frequency bands, with a goal of identifying  The ionosphere data set has 351 (350 unique) radar returns, each composed of 34 responses (quantized to 10 levels), with a goal of identifying if an event represents a free electron in the ionosphere. The Statlog DNA data set is composed of 3186 (3001 unique) DNA sequences with 180 features, with a goal of predicting whether the sequence represents a boundary of DNA to be spliced in or out. The Boston housing data set contains 13 attributes (quantized to 10 levels) pertaining to 506 (469 unique) different neighborhoods around Boston, with a goal of predicting which quartile the median income of the neighborhood the neighborhood falls. The soybean data set is composed of 307 examples (303 unique) composed of 34 categorical features, with a goal of predicting from among 19 diseases which is afflicting the soy bean plant. The pima data set is composed of 8 features (with continuous features quantized to 10 levels) corresponding to medical information and tests for 768 patients (753 unique feature patterns), with a goal of diagnosing diabetes. The Wisconsin breast cancer data set contains 30 features corresponding to properties of a cell nucleus for 569 samples, with a goal of identifying if the cell is malignant or benign. The mammography data set contains 6 features from mammography scans (with age quantized into 10 bins) for 830 patients, with a goal of classifying the lesions as malignant or benign.
\end{document}